%% file: acl_latex.tex
\documentclass[11pt]{article}

\usepackage[preprint]{acl}

\usepackage{times}
\usepackage{latexsym}

\usepackage[T1]{fontenc}

\usepackage[utf8]{inputenc}
\usepackage{tcolorbox}
\usepackage{microtype}

\usepackage{inconsolata}

\usepackage{graphicx}
\usepackage{booktabs}
\usepackage{multirow}
\usepackage{graphicx}

\usepackage{amsmath}
\usepackage{amssymb}
\usepackage{mathtools}
\usepackage{amsthm}

\usepackage{multirow}
\usepackage[table,xcdraw]{xcolor}
\usepackage[normalem]{ulem}
\useunder{\uline}{\ul}{}
\usepackage[capitalize,noabbrev]{cleveref}
\tcbuselibrary{breakable}
\theoremstyle{plain}
\newtheorem{theorem}{Theorem}[section]

\theoremstyle{definition}

\newtheorem{assumption}[theorem]{Assumption}
\theoremstyle{remark}

\usepackage{hyperref}

\usepackage{algorithm}
\usepackage{algorithmic}
\usepackage{enumitem}

\title{Improve Large Language Model Systems with User Logs}

\usepackage{authblk}

\author[1,2]{\textbf{Changyue Wang}\thanks{cy-wang24@mails.tsinghua.edu.cn}}
\author[1,2]{\textbf{Weihang Su}}
\author[1]{\textbf{Qingyao Ai}\thanks{Corresponding Author: aiqy@tsinghua.edu.cn}}
\author[2]{\textbf{Xingzhao Yue}}
\author[2]{\protect\\\textbf{Rui Zhang}}
\author[2]{\textbf{Xiaojia Chang}}
\author[1]{\textbf{Yiqun Liu}}

\affil[1]{Department of Computer Science and Technology, Tsinghua University}
\affil[2]{TikTok}

\begin{document}
\maketitle
\begin{abstract}
Scaling training data and model parameters has long driven progress in large language models (LLMs), but this paradigm is increasingly constrained by the scarcity of high-quality data and diminishing returns from rising computational costs. As a result, recent work is increasing the focus on continual learning from real-world deployment, where user interaction logs provide a rich source of authentic human feedback and procedural knowledge.
However, learning from user logs is challenging due to their unstructured and noisy nature. Vanilla LLM systems often struggle to distinguish useful feedback signals from noisy user behavior, and the disparity between user log collection and model optimization (e.g., the off-policy optimization problem) further strengthens the problem.  To this end, we propose UNO (User log-driveN Optimization), a unified framework for improving LLM systems (LLMsys) with user logs. UNO first distills logs into semi-structured rules and preference pairs, then employs query-and-feedback-driven clustering to manage data heterogeneity, and finally quantifies the cognitive gap between the model’s prior knowledge and the log data. This assessment guides the LLMsys to adaptively filter out noisy feedback and construct different modules for primary and reflective experiences extracted from user logs, thereby improving future responses. Extensive experiments show that UNO achieves state-of-the-art effectiveness and efficiency, significantly outperforming Retrieval Augmented Generation (RAG) and memory-based baselines.~\footnote{We have open-sourced our code at https://github.com/bebr2/UNO .}
\end{abstract}

\section{Introduction}
In recent years, the development of large language models (LLMs) has largely followed scaling laws, in which scaling training data and model parameters yields predictable capability gains~\cite{kaplan2020scalinglawsneurallanguage,fang2024scaling}.  However, this progress is increasingly constrained by the scarcity of high-quality training data and diminishing marginal returns relative to rapidly rising computational costs~\cite{10.5555/3692070.3694094}. These challenges motivate a growing interest in moving beyond static, pre-trained models toward dynamic systems that continually adapt through real-world interaction~\cite{10.1145/3735633, lifelong}.

In information retrieval (IR), exploiting user logs (e.g., click behavior) to construct self-improving loops is a well-established paradigm for continuous optimization without human supervision~\cite{recommender, croft2010search}. Similarly, deployed LLMs accumulate extensive user logs  (e.g., prompts, model outputs, and user feedback) offering large-scale, authentic feedback and valuable procedural memories. These logs represent a rich but underexplored resource for optimizing LLM-based systems (LLMsys)~\cite{ai2025memorybenchbenchmarkmemorycontinual}.

Despite these successes in IR, log-driven continual learning for LLMsys faces fundamental challenges.
First, unlike fine-grained click behaviors in IR~\cite{searchengine, userfeedback}, user feedback for LLMs is sparse and unstructured, making it difficult to extract useful signals. For example, LLMsys provides responses in free-form text, and users either directly exit without interacting with it or only provide coarse-grained verbal or action feedback (e.g., copy) to the response as a whole. 
Our preliminary study reveals  LLMs struggle to distinguish useful knowledge and feedback from noisy signals in user logs, presenting a \textit{Signal-or-Noise Dilemma}.\footnote{We describe this in detail in Section~\ref{cha:cga}.}
Second, in contrast to IR systems, where knowledge is explicitly stored outside the retrieval model, LLMs entangle knowledge and reasoning implicitly within their parameters, making autonomous learning highly susceptible to catastrophic forgetting in the parameter space~\cite{gao2025surveyselfevolvingagentspath}.
Finally, logs collected from product systems may mismatch the target model's distribution. Such off-policy optimization is particularly challenging for LLMs due to their massive parameter size and black-box nature~\cite{levine2020offlinereinforcementlearningtutorial}.

To address these challenges, we introduce \textbf{UNO} (User log-driveN Optimization), a framework for continual LLM evolution from user logs. UNO has two stages: learning and inference.
In the learning stage, UNO constructs multiple experience modules from collected user logs, each implemented as a specialized adapter functioning either as a generation expert or a critic for iterative refinement.
Specifically, for each user session, UNO extracts a semi-structured rule set\footnote{Actionable editing guidelines distilled from unstructured logs. See Section~\ref{cha:rule} for details.} from the feedback signal  (e.g., user corrections), which tells the models how to improve the response. Following these rules, it generates a revised response, forming a preference pair.
Then it conducts agglomerative clustering~\cite{müllner2011modernhierarchicalagglomerativeclustering} based on queries and rule sets to divide the logs into multiple clusters. Inspired by Dewey’s philosophy on Experience and Nature~\cite{dewey2012experience}, UNO constructs two types of specialized parameter-efficient adapters (e.g., LoRA~\cite{lora}) for these clusters: \textit{primary experience modules} and \textit{reflective experience modules}. The primary experience module acts as an expert for direct generation. 
The reflective experience module learns a critic adapter that critiques a draft response and provides actionable revision suggestions, enabling an iterative refine-and-regenerate loop, without directly changing the base model's weights at inference time.
During inference stage, UNO retrieves experience modules relevant to the current context and composes them with the base LLM to generate enhanced responses.

To determine which type of experience module to use for each cluster, we define \textbf{cognitive gap} to measure how well the experience extracted from user logs matches the original LLM's understanding of the cluster's queries.
If the cognitive gap is small, we train a primary experience module and evaluate it (i.e., the LoRA learned directly from user logs) using a simulated verifier based on the LLM-as-Judge paradigm~\cite{li2025generationjudgmentopportunitieschallenges}, using the extracted rules as context. If the verifier determines the performance is satisfactory, the cluster remains a primary experience cluster.
Otherwise, or if the initial cognitive gap is large, the cluster is treated as a reflective experience cluster and user logs are used to construct the critic models that provide suggestions instead of directly changing the parameters.
Since user logs may be collected under historical policies or contain noises, this module-level adaptation with verification or critique mitigates off-policy risks and prevents harmful direct updates to the base model.
We evaluate UNO on MemoryBench~\cite{ai2025memorybenchbenchmarkmemorycontinual}, a comprehensive continual learning benchmark covering multiple datasets, domains, tasks, and languages, and further evaluate performance on wild and off-policy user logs (WildFB~\cite{peng2026wildrewardlearningrewardmodels}).
Compared to conventional training, Retrieval-Augmented-Generation (RAG)~\cite{lewis2020retrieval}, and memory-based methods, UNO demonstrates state-of-the-art performance in both effectiveness and efficiency.

To summarize, our contributions are as follows:
\begin{itemize}[leftmargin=*]
    \item We conduct, as far as we know, the first study on the optimization of LLMsys capabilities using user logs and characterize several challenges, including the \textit{Signal–or-Noise Dilemma}. 
    \item We propose UNO, a novel framework for LLMsys continual learning from user logs. It features cluster-based multiple experience management driven by model-log cognitive gaps.
    \item Extensive experiments demonstrate UNO achieves state-of-the-art performance across  tasks and languages, significantly outperforming RAG and memory-based baselines.
\end{itemize}

\section{Related Work}

\subsection{Memory for LLM Systems}
Research on LLM memory has evolved into complex management systems (e.g., MemoryOS~\cite{kang-etal-2025-memory}, Mem0~\cite{Mem0}) and corresponding benchmarks (e.g., LoCoMo~\cite{maharana-etal-2024-evaluating}). A key distinction between these works and ours is data granularity. While existing systems focus on user-level personalization evaluated on individual historical logs, we leverage system-wide logs. We aggregate diverse user feedback to drive continual learning and evaluate performance on unseen online requests. Furthermore, they focus on declarative memory while in our scenarios, procedural memory is more critical.

\subsection{Evolutionary Agent Frameworks}

Continual learning~\cite{shi2025continuallearning} often employs evolutionary frameworks~\cite{zhai2025agentevolverefficientselfevolvingagent, feng2025evoagentselfevolvingagentcontinual} (e.g., AlphaEvolve~\cite{novikov2025alphaevolvecodingagentscientific}) and memory integration (e.g., MemRL~\cite{zhang2026memrlselfevolvingagentsruntime}) for self-evolution. However, these systems rely heavily on explicit, repeated reward signals from environments (e.g., code executors). In contrast, our work optimizes LLMs using real-world user logs. This presents the unique challenge of driving continual learning from implicit, non-repeated feedback, as regular users can not act as dedicated annotators.

\subsection{User Log-driven Optimization Paradigms}

User log-based optimization is well-established in search and recommendation, but remains relatively nascent in the LLM domain. Recent works have explored using specific behavioral signals (e.g., clicks) for query suggestions~\cite{yin2025clickspreferencemultistagealignment} or action feedback for emotional support~\cite{han2025reinforcementlearninguserfeedback}, but they are limited to narrow task settings.
While datasets like WildChat~\cite{wildchat} and WildFeedback~\cite{wildfeedback} provide real-world logs for preference alignment, optimizing LLMsys directly on them is challenging. Their off-policy nature, originating from various, often closed-source models, makes it difficult to distinguish feedback-driven capability gains from mere knowledge distillation. Moreover, they lack downstream tasks to evaluate capability improvements. MemoryBench~\cite{ai2025memorybenchbenchmarkmemorycontinual} resolves these bottlenecks by offering system-specific dialogues and using future queries for evaluation. This enables our work to introduce a novel paradigm enhancing core LLMsys capabilities via raw user logs.

\section{Methodology}
In this section, we describe the UNO (\textbf{U}ser log-drive\textbf{N} \textbf{O}ptimization) framework in detail. UNO aims to enable autonomous, continual learning for LLMs by leveraging raw user logs. We first introduce the task and define the notations, then provide an overview of UNO, and describe the complete workflow (preprocessing, training, and inference).

\subsection{Problem Formulation and Preliminaries}
Let $\pi_{\theta}$ denote a base LLM policy with parameters $\theta$. User logs are represented as a set of sessions $\mathcal{D} = \{S_1, S_2, \dots, S_N\}$, where each session $S_i = (q_i, y_i, \mathcal{F}_i)$ comprises a user query $q_i$, an initial response $y_i \sim \pi_{\theta}(\cdot \mid q_i)$, and a subsequent interaction trajectory $\mathcal{F}_i = \{(u_{i,t}, r_{i,t})\}_{t=1}^{T_i}$ of $T_i \ge 0$ dialogue turns ($u_{i,t}$ is user input, $r_{i,t}$ is LLM response). In real user logs, such trajectories may reflect explicit corrections or implicit preferences from users, but may also contain irrelevant noise or be empty (i.e., $T_i = 0$).
The goal is to exploit $\mathcal{D}$ to optimize $\pi_{\theta}$ into an improved policy $\pi'$. 
Successful optimization requires that for a new test query $q_{test}$, the expected response quality $\mathcal{V}$ improves:

{\footnotesize
\[
\mathbb{E}_{q_{test}}[\mathcal{V}(\pi' (\cdot \mid q_{test}))] \geq
\mathbb{E}_{q_{test}}[\mathcal{V}( \pi_{\theta}(\cdot \mid q_{test}))],
\]
}
Note that user logs may contain either high-value signals or noise. The optimization framework requires two core capabilities: 1) \textbf{Positive Adaptivity} to convert valuable signals into capability improvements; and (2) \textbf{Noise Robustness} to maintain stable performance against low-quality logs. 

Although not originally designed for this task, several existing approaches can be adapted.
Traditional full fine-tuning updates model parameters to $\theta'$ to directly fit the log distribution, yielding $\pi_{FT}(\cdot \mid q_i) = \pi_{\theta'}(\cdot \mid q_i)$. Memory-based systems retrieve external memory entries to construct a context $M_i$, yielding $\pi_{Mem}(\cdot \mid q_i) = \pi_{\theta}(\cdot \mid q_i, M_i)$.

\subsection{Overview of UNO}

\begin{figure*}
    \centering
    \includegraphics[width=0.92\linewidth]{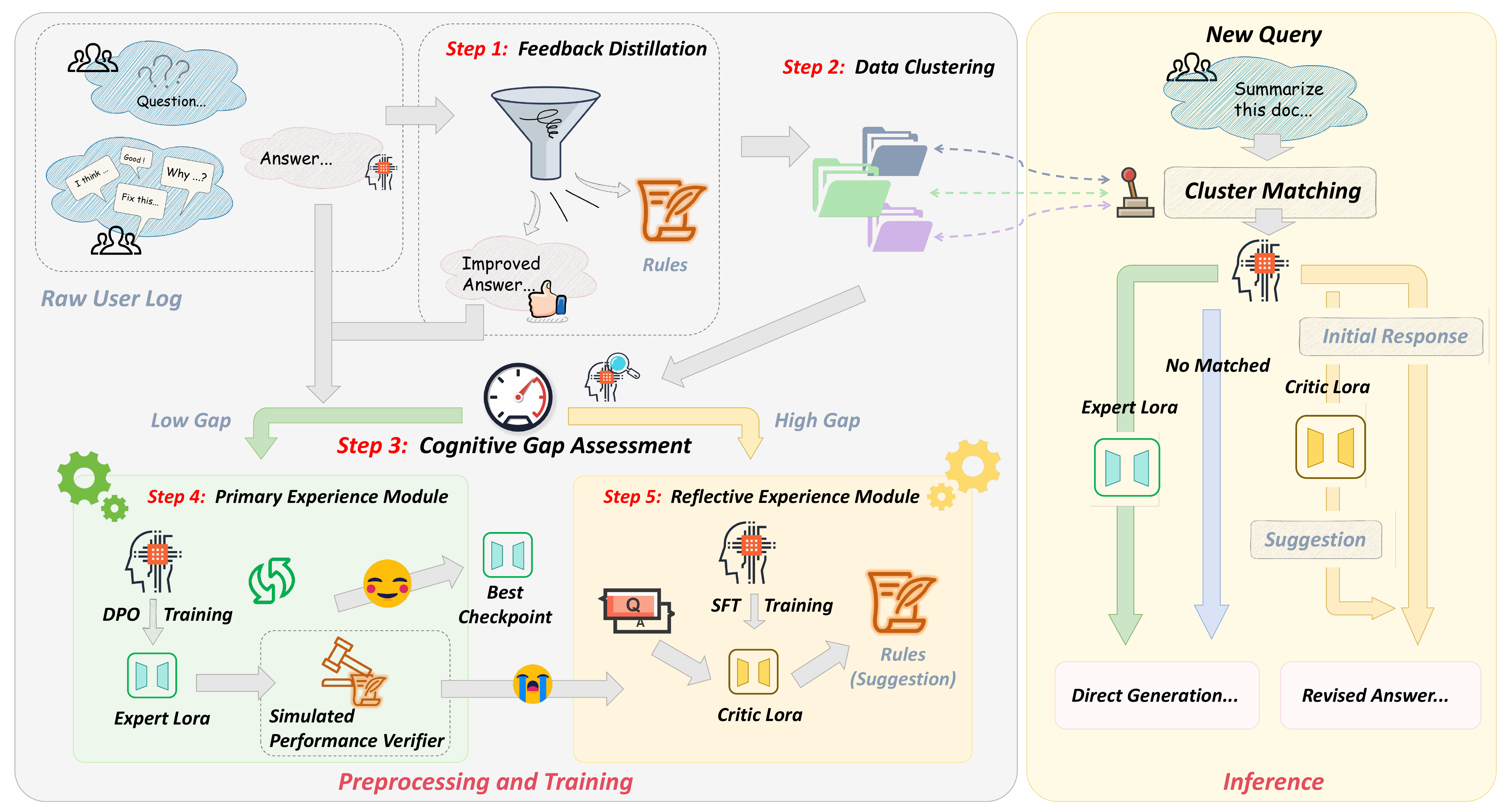}
    \caption{The workflow of UNO. UNO first distills and filters raw user logs, then performs clustering and a cognitive gap assessment to select the type of experience module (primary or reflective). At inference time, UNO identifies the appropriate cluster and applies an inference strategy aligned with the type of that cluster.}
    \label{fig:workflow}
\end{figure*}
UNO is the first optimization framework enabling LLMsys to continuously and adaptively evolve on user logs while maintaining noise robustness.  UNO consists of four stages: 1) \textbf{Preprocessing}: it filters and distills noisy user logs into a preference dataset, partitions the data, and assesses how well the LLM accepts user feedback, which is used to estimate optimization difficulty. Guided by this, the framework constructs either a \textbf{Primary Experience Module} or a \textbf{Reflective Experience Module} for different clusters,  inspired by John Dewey’s philosophy on Experience and Nature~\cite{dewey2012experience}. The former trains an Expert LoRA to directly answer queries, while the latter trains a Critic LoRA that offers guidance on the base LLM’s initial response during inference. For the \textbf{Primary Experience Module}, we further build a simulated performance verifier using distilled rules to evaluate trained LoRAs before deployment. Finally, at 4) \textbf{Inference Workflow}, for a new query, UNO first matches it to the nearest cluster. If the cluster has a primary experience module, the corresponding Expert LoRA is loaded to answer. If it is based on reflective experience, the base LLM generates an initial answer, the Critic LoRA provides feedback, and the LLM revises it accordingly. Figure~\ref{fig:workflow} and Algorithm~\ref{alg} outline the complete workflow and training pseudocode, and Appendix~\ref{apd:casestudy} shows UNO case studies.

\subsection{Preprocessing}

\subsubsection{Raw User Log Filtering and Distillation}
\label{cha:rule}
Raw user logs consist of unstructured text and are potentially noisy~\cite{ai2025memorybenchbenchmarkmemorycontinual}. We utilize the LLM to filter meaningless content and convert the rest into semi-structured rule sets.
Specifically, for each dialogue session $(q_i, y_i, \mathcal{F}_i)$, we discard samples with empty $\mathcal{F}_i$. The base LLM then performs \textit{feedback filtering and distillation}, transforming unstructured feedback into an explicit rule set $\mathcal{R}_i = \{r_{i,1}, r_{i,2}, \dots, r_{i,k}\}$. 
For instance, if a user complains that a generated journalistic report of an academic paper is too technical and lacks social impact, the distilled rules would be: \textit{1) Use accessible, non-technical news style; 2) Emphasize real-world impact.} 
We remove items with empty rule sets, as these correspond to uninformative dialogues.
Subsequently, the LLM is guided to revise the original response $y_i$ under $\mathcal{R}_i$'s  constraints, yielding a better response $y_i^w \sim \pi_{\theta}(\cdot \mid q_i, \mathcal{R}_i, y_i)$. This constructs a preference pair $(q_i, y_i^w, y_i^l)$, where $y_i^l = y_i$.

\subsubsection{Dual-feature Data Clustering}  
\label{cha:clustering}
To facilitate training and more precise assessment of noise risk, we cluster the data by jointly leveraging information from both the query and the rule set. Specifically, we construct a vector $v_i = \text{Norm}[E(q_i) \oplus E(\mathcal{R}_i)]$ where $E(\cdot)$ represents a text encoder and $\oplus$ denotes concatenation.
We employ the standard hierarchical agglomerative clustering implementation from \textit{scikit-learn}~\cite{scikit-learn}, using Ward's linkage~\cite{müllner2011modernhierarchicalagglomerativeclustering} to iteratively merge cluster pairs minimizing the increase in intra-cluster variance, ultimately yielding a set of clusters $\mathcal{C}$. This aligns samples within each cluster in semantic intent and applicable rules, thereby lowering the complexity of model training. We further present the clustering details at Appendix~\ref{apd:clustering}.

\subsubsection{Cognitive Gap Assessment}  
\label{cha:cga}
\paragraph{\textbf{Signal-or-Noise Dilemma.}} 
The main challenge in exploiting user logs for model optimization is determining whether they truly contribute to improvement.
While LLMs filter noise during distillation, this self-filtering fails when incoming data exceeds their capability boundaries. 
When signals in logs differ substantially from the model’s cognition, two interpretations emerge: (1) they encode unmastered but valuable hard knowledge; or (2) they are dominated by harmful noise or user bias. A self-improving system cannot theoretically distinguish these cases, creating the \textbf{\textit{Signal-or-Noise Dilemma}}.  
To address this, we introduce the \textbf{\emph{cognitive gap}} to reflect task difficulty and identify potentially high-risk noise within clusters.

\textbf{Assessment Process.}  
We quantify the cognitive gap through three steps:  \textbf{1) \textit{Rule Prediction}}: For each query $q_i$ in cluster $\mathcal{C}_k$, the base model $\pi_\theta$ independently generates the rule set $\mathcal{R}_i^{LLM}$ without access to user logs.  
\textbf{2) \textit{Gap Quantification}}: A pre-trained reranker computes the semantic distance between predicted and distilled rules, yielding a cognitive gap score $g_i = \text{Dist}(\mathcal{R}_i^{LLM}, \mathcal{R}_i) \in [0, 1]$.  
\textbf{3) \textit{Cluster-level Profiling}}: We calculate the average gap $\mu_k = \frac{1}{|\mathcal{C}_k|} \sum_{i \in \mathcal{C}_k} g_i$ for each cluster. Based on the value of $\mu_k$ and a predefined threshold $\tau^*$, clusters are classified as ``Low-Gap'' or ``High-Gap''.

\textbf{Theoretical Framework of Cognitive Gap.}  
We formalize the cognitive gap to  evaluate optimization risk. Let $H$ denote high-quality data and $N$ noisy data, with priors $P(H)=\alpha$ and $P(N) = 1 - \alpha$. The core intuition lies in the LLM's noise discrimination capability across cognitive regions. In low-gap regions, high-quality data represent minor refinements of the model’s cognition and are largely aligned with it, whereas noisy data are inherently random and lack this alignment, making the two highly separable. In contrast, high-gap regions contain samples deviating substantially from the model’s current capabilities. Here, an LLM cannot reliably distinguish ``informative novel signals'' from ``uninformative random noise,''  causing their cognitive distributions to converge. We further discuss this at Appendix~\ref{apd:assumdiscuss}, and formalize it with the following assumption: 

\begin{assumption}
\label{assum:gap_dist}
For a small threshold $\tau > 0$, in the low-gap region, $P_{N}(g_i \leq \tau) \ll P_H(g_i \leq \tau)$. Conversely, in the high-gap region, the distributions of high-quality and noisy data converge, such that $P_H(g_i > 1 - \tau) \approx P_{N}(g_i > 1 - \tau)$.
\end{assumption}

\begin{theorem}[Noise Risk Bound]
For data with small cognitive gaps, the posterior probability of noise $P(N|g_i \leq \tau)$ is strictly bounded:

{\small
\begin{equation}
P(N|g_i \leq \tau) \leq \frac{1}{1 + \alpha \frac{P_H(g_i \leq \tau)}{P_{N}(g_i \leq \tau)}}
\end{equation}
}
\label{theo1}
\end{theorem}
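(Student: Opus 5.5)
The plan is a direct application of Bayes' rule, followed by one elementary monotonicity step that replaces the factor $1-\alpha$ by $1$. Write $E = \{g_i \leq \tau\}$. Throughout, $P_H(\cdot) = P(\cdot \mid H)$ and $P_N(\cdot) = P(\cdot \mid N)$ are the class-conditional distributions of the gap score, with priors $P(H) = \alpha$ and $P(N) = 1-\alpha$. I will assume $0 < \alpha < 1$ and $P_N(E) > 0$, so that the ratio appearing in the statement is well defined. Assumption~\ref{assum:gap_dist} is not needed for the inequality itself. It only explains why the bound is informative: the low-gap condition $P_N(E) \ll P_H(E)$ makes the ratio $P_H(E)/P_N(E)$ large, which pushes the right-hand side toward $0$.

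\textbf{Step 1 (Bayes).} By the law of total probability, $P(E) = \alpha P_H(E) + (1-\alpha) P_N(E)$, which is strictly positive. Hence
\[
P(N \mid E) = \frac{(1-\alpha) P_N(E)}{(1-\alpha) P_N(E) + \alpha P_H(E)} .
\]
Dividing numerator and denominator by $(1-\alpha) P_N(E) > 0$ gives the exact identity
\[
P(N \mid E) = \frac{1}{1 + \frac{\alpha}{1-\alpha}\,\frac{P_H(E)}{P_N(E)}} .
\]

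\textbf{Step 2 (relaxation).} Since $0 < 1-\alpha \leq 1$, we have $\frac{\alpha}{1-\alpha} \geq \alpha$. Multiplying by the nonnegative ratio $P_H(E)/P_N(E)$ gives
\[
\frac{\alpha}{1-\alpha}\,\frac{P_H(E)}{P_N(E)} \;\geq\; \alpha\,\frac{P_H(E)}{P_N(E)} .
\]
The map $x \mapsto 1/(1+x)$ is decreasing on $[0,\infty)$, so applying it to both sides yields the claimed bound
\[
P(N \mid g_i \leq \tau) \leq \frac{1}{1 + \alpha \frac{P_H(g_i \leq \tau)}{P_N(g_i \leq \tau)}} .
\]

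\textbf{Degenerate cases.} I do not expect a genuine obstacle; the only care needed is in these edge cases.
\begin{itemize}
\item If $\alpha = 1$, then $P(N \mid E) = 0$ and the bound holds trivially.
\item If $\alpha = 0$, then $P(N \mid E) = 1$ and the right-hand side also equals $1$.
\item If $P_N(E) = 0$, then $P(N \mid E) = 0$, and the right-hand side is read as $0$ in the limiting sense (ratio $+\infty$).
\end{itemize}
I would record these in a single sentence. I would also remark that the bound is tight up to the factor $1-\alpha$: the exact identity from Step 1 is the sharper statement, and the relaxation only makes the expression cleaner.
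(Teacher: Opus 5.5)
Your proposal is correct and follows the paper's proof essentially step for step: Bayes' rule, division by $(1-\alpha)P_N(g_i \leq \tau)$ to get the exact identity with factor $\frac{\alpha}{1-\alpha}$, then the relaxation $\frac{\alpha}{1-\alpha} \geq \alpha$ for $1-\alpha \in (0,1]$. You are slightly more careful than the paper, which treats only $\alpha = 1$ separately and leaves implicit both the monotonicity of $x \mapsto 1/(1+x)$ and the positivity conditions $P_N(g_i \leq \tau) > 0$ and $P(g_i \leq \tau) > 0$ that you state.
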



The proof is based on Bayes' theorem and detailed in Appendix~\ref{apd:proof32}. As $P_H/P_{N}$ grows sharply in the low-gap region, training risk decreases rapidly with smaller $\tau$. In contrast, for high-gap regions ($g_i > 1 - \tau$), the posterior approaches $P(N|g_i > 1-\tau) \approx 1-\alpha$, indicating hard samples become indistinguishable from noise without a meaningful risk bound. Next, we illustrate that under some assumptions (e.g., $L_\Psi$-Lipschitz continuity), clustering concentrates the cognitive gaps within each cluster, forming distinct low- and high-gap clusters to guide our optimization strategy.


\begin{theorem}[Variance Reduction via Clustering]
Under UNO’s dual-feature clustering, the intra-cluster variance of the cognitive gap $\text{Var}(g_i)$ is bounded by the cluster diameter $\epsilon$:

{\small
\begin{equation}
\begin{scriptsize}
Var(g_i) < [C \cdot (1 + L_\Psi) \cdot \epsilon]^2
\end{scriptsize}
\end{equation}
}

\noindent where $C$ and $L_\Psi$ are constants.
\label{theo2}
\end{theorem}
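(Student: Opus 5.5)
The argument is a Lipschitz-propagation argument, and I will make the implicit modeling explicit first. Each sample $i$ is represented by its normalized dual feature $v_i = \text{Norm}[E(q_i)\oplus E(\mathcal{R}_i)]$. The cognitive gap is viewed as a composite map: the model's predicted rule set $\mathcal{R}_i^{LLM}$ depends on the query, and we write its embedding as $\Psi(E(q_i))$, where $\Psi$ is $L_\Psi$-Lipschitz. The reranker distance is taken to be $g_i = \text{Dist}(\mathcal{R}_i^{LLM},\mathcal{R}_i) = \phi\big(\Psi(E(q_i)), E(\mathcal{R}_i)\big)$, where $\phi$ is Lipschitz with constant $C_\phi$ in each argument, with respect to the embedding norm. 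A cluster of diameter $\epsilon$ means $\|v_i - v_j\| \le \epsilon$ for all $i,j$ in $\mathcal{C}_k$. The normalization is a fixed rescaling, or is absorbed into a constant $C_N$ relating the unnormalized to the normalized distance. Hence both $\|E(q_i)-E(q_j)\|$ and $\|E(\mathcal{R}_i)-E(\mathcal{R}_j)\|$ are at most $C_N\epsilon$, since each is a sub-block of the concatenation.

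\textbf{Step 1: pairwise gap bound.} For $i,j\in\mathcal{C}_k$, the triangle inequality gives
\[
|g_i-g_j| \le C_\phi\|\Psi(E(q_i))-\Psi(E(q_j))\| + C_\phi\|E(\mathcal{R}_i)-E(\mathcal{R}_j)\|.
\]
Applying the Lipschitz property of $\Psi$ and the block bounds from above, this is at most $C_\phi C_N (L_\Psi + 1)\epsilon$. Set $C := C_\phi C_N$. This is where the dual-feature design matters. Clustering on queries alone would control only the first term, and clustering on rules alone only the second, so the concatenation is what makes the $(1+L_\Psi)$ factor appear.

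\textbf{Step 2: from oscillation to variance.} Let $\bar g = \mu_k$ be the cluster mean. Because $\bar g$ is a convex combination of the $g_j$, we have $|g_i-\bar g| \le \max_j |g_i-g_j| \le C(1+L_\Psi)\epsilon$. Therefore
\[
\text{Var}(g_i) = \frac{1}{|\mathcal{C}_k|}\sum_i (g_i-\bar g)^2 \le [C(1+L_\Psi)\epsilon]^2.
\]
The alternative identity $\text{Var} = \frac{1}{2}\mathbb{E}(g_i-g_j)^2$ would even give a factor $1/2$.

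\textbf{Step 3: strict inequality.} The statement claims strict inequality. I would obtain it from the $1/2$ slack in the pairwise identity whenever $\epsilon>0$. In the degenerate case $\epsilon = 0$ all gaps coincide and the variance is $0$; here I would either enlarge $C$ slightly or note that strictness is not meaningful and state the bound non-strictly.

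\textbf{Main obstacle.} Everything hinges on the modeling assumption in the first paragraph: that the LLM rule prediction factors as a Lipschitz map $\Psi$ on query embeddings, and that the reranker distance is Lipschitz in embedding space. Neither is literally true of a stochastic LLM or a cross-encoder reranker. I would state both explicitly as assumptions, possibly taking expectations over generation randomness so that $\Psi$ is the mean predicted-rule embedding, and I would not attempt to derive them. Ward's linkage itself plays no role beyond guaranteeing a bounded cluster diameter $\epsilon$, which I would also take as given.
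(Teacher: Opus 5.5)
Your proposal is correct and follows essentially the same route as the paper: you split $|g_i-g_j|$ through a mixed intermediate term into a rule-side piece and a predicted-rule-side piece, bound each using the block-wise consequence of the concatenated diameter bound and the Lipschitz property of $\Psi$, and then pass to the variance. The differences are cosmetic: you assume $\text{Dist}$ is Lipschitz in each argument (and $\Psi$ Lipschitz) in embedding space, where the paper assumes a metric with $d_{\text{sem}}\le C\cdot|E(A)-E(B)|$; you also spell out the variance step and get strictness from the factor $\tfrac12$, whereas the paper just cites ``the definition of variance'' and inherits strictness from its strict diameter $<\epsilon$.
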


The proof bounds intra-cluster gap differences using Lipschitz continuity and the triangle inequality (Appendix~\ref{apd:proof33}).  According to Theorem~\ref{theo2}, beyond task decoupling, clustering reduces the variance of $g_i$ by bounding diameter $\epsilon$. This enables reliable decisions on whether a cluster is suitable for direct fine-tuning based on the mean gap $\mu_k$, since the estimation of the expectation becomes more accurate with lower sample variance. Clusters with low $\mu_k$ reliably contain absorbable knowledge for the model. Conversely, clusters with high $\mu_k$ lack reliable risk bounds and are likely noisy. In such cases, UNO adopts \textit{Reflective Experience Construction} to maintain base model stability.

\subsection{Primary Experience Module}
For clusters below the cognitive gap threshold $\tau^*$, the data serve only as a fine-tuning of the model’s cognition, and the risk of noise is controlled.  Thus, we directly internalize log-derived capability by training a LoRA on preprocessed preference data.

\textbf{Cluster-specific Preference Learning.}  
We partition data into training and validation sets. For low- or moderate-gap clusters, we train an \textit{Expert LoRA} $\Delta\theta_C$ using the Direct Preference Optimization (DPO~\cite{dpo}) loss alongside a negative log-likelihood (NLL) loss on chosen responses for stability. This directly embeds new capabilities into the parameter space.

\textbf{Simulation-based Validation.}  
We construct a simulated performance verifier from the validation set to assess training efficacy. For each query $q_i$, using $\mathcal{R}_i$ as context, an LLM-as-Judge compares $y_{i}^{C}\sim \pi_{\theta + \Delta\theta_C(\cdot | q_i)}$ against the base $y_i \sim \pi_{\theta}(\cdot | q_i)$ per epoch.
Crucially, by strictly grounding the judge in the rule set $\mathcal{R}_i$ extracted from actual user feedback, we effectively mitigate potential "judge-hacking" where the adapter might otherwise exploit the base model's inherent biases.
We retain the expert module if its peak win rate $\text{WinRate}_{best} > \gamma$. Otherwise, we fall back to \textit{Reflective Experience Construction}.
Furthermore, to accelerate the evaluation process and further alleviate potential reward-hacking, we pre-filter responses that are essentially empty or deviate drastically from the original response (e.g., those with a BLEU-4 score~\cite{papineni-etal-2002-bleu} below a small threshold, such as 0.05).

\subsection{Reflective Experience Module}

For clusters with large cognitive gaps or those failing primary validation, we train a \textit{Critic LoRA} $\Delta \omega_{C}$, which is designed to provide ``pseudo user feedback'' $\hat{\mathcal{R}}$ for the model’s initial responses.
The training input consists of a query $q_i$ and a model response $y_i$, while the output is an extracted rule set $\mathcal{R}_i$, optimized using a negative log-likelihood loss. The checkpoint with the lowest validation loss becomes the final Critic LoRA.
During inference, Critic LoRA critiques the base LLM’s initial output, allowing the base LLM to refine its response. Our experiments show that, for clusters with high noise risk, this strategy effectively leverages informative signals in the logs and significantly reduces noise-induced collapse compared to directly training an Expert LoRA.
Moreover, the Reflective Experience Module prevents UNO’s optimization framework from becoming overly conservative, ensuring robust utilization of valuable user-log signals even for clusters filtered out by cognitive gap assessment or unsuccessful primary experience construction.

\subsection{Inference Workflow}

During inference, for a query $q_{new}$, the policy $\pi_{\theta'}$ computes the euclidean distance between $q_{new}$'s embedding and the query embeddings of all cluster centroids, and assigns it to the closest cluster $C^*$. Response generation depends on $C^*$'s type:
\begin{itemize}[leftmargin=*]
    \item \textbf{Primary Path}: For a \textbf{primary experience cluster}, we sample directly from its \textit{Expert LoRA}:
    $y \sim \pi_{\theta + \Delta\theta_{C^*}}(\cdot | q_{new})$.
    \item \textbf{Reflective Path}: For a \textbf{reflective experience cluster}, we use a two-stage generation:
    $y_{\text{init}} \sim \pi_{\theta}(\cdot | q_{new})$, $\hat{\mathcal{R}} \sim \pi_{\theta + \Delta \omega_{C^*}}(\cdot | q_{new}, y_{\text{init}})$, and
    $y \sim \pi_{\theta}(\cdot | q_{new}, y_{\text{init}}, \hat{\mathcal{R}})$.
    \item \textbf{Fallback}: If $q_{new}$ is an outlier, i.e., having a distance to the nearest centroid exceeding threshold $d$, we revert to the base policy: 
    $y \sim \pi_{\theta}(\cdot | q_{new})$.
\end{itemize}

\section{Experimental Setup}
In this section, we introduce the evaluation and implementation details. Further experimental details are provided in Appendix~\ref{apd:detailsexp}.

\subsection{Evaluation}
We evaluate the methods on \textbf{MemoryBench}~\cite{ai2025memorybenchbenchmarkmemorycontinual}, the first benchmark for user log-driven continual learning, where a carefully validated \textit{User Simulator} produces interaction logs conditioned on each model’s responses. The benchmark includes user logs collected during the early stage (the training set) and new requests (the test set). All strategies are performed on the training set and evaluated on the test set across four task-based datasets: Short-input–Long-output, Short-input–Short-output, Long-input–Long-output, and Long-input–Short-output. 
Each task contains logs (dialogue sessions, including model responses and user verbal feedback) derived from queries in distinct data subsets. Since test queries span datasets, MemoryBench first computes dataset-specific metrics and applies min–max normalization or z-score for final task-level performance. For both, higher values correspond to better performance. Raw dataset-specific results are in Appendix~\ref{apd:finegrained}.

Moreover, to assess performance on real human interactions, we experiment on the \textbf{WildFB}~\cite{peng2026wildrewardlearningrewardmodels}, a subset of WildChat. We clean its test set and partition it into logs for evolution and evaluation tasks (detailed in Appendix~\ref{apd:evaluation_details}). Using WildReward-8B from the original paper as the evaluator, we compute the win rate of generated versus original responses. This task mainly evaluates optimization frameworks under realistic noise and user behavior. Notably, a gap exists between this setting and our primary evaluation, as the models in dialogues differ from those being optimized.

We evaluate strategies using Qwen3-8B~\cite{yang2025qwen3technicalreport} (no-thinking mode) and phi-4 (14B)~\cite{abdin2024phi4technicalreport}. We also test a lighter variant, UNO-Single, in which the Reflective Path is removed, and responses for filtered clusters are generated directly by the base LLM. Thus, UNO-Single adopts more conservative optimization without adding additional inference tokens. Significance is tested via paired $t$-test~\cite{student1908probable}.

\subsection{Baselines}Prior work does not propose a pipeline specifically tailored for user log-driven optimization. In addition to the direct output of the base model, we compare against three related method categories:
1) \textbf{Retrieval-Augmented Generation (RAG)}: We use either BM25~\cite{robertson2009probabilistic} or Qwen3-Embedding-0.6B~\cite{zhang2025qwen3embeddingadvancingtext} as the retriever. 
2) \textbf{Memory for LLM Systems}: We evaluate MemOS~\cite{li2025memosmemoryosai}, ReMem~\cite{wei2025evomemorybenchmarkingllmagent}, A-Mem~\cite{xu2025amem}, Mem0~\cite{Mem0}, and MemoryOS~\cite{kang-etal-2025-memory}.
3) \textbf{Training Methods}: Using preference data constructed by UNO, we evaluate two offline training approaches: supervised fine-tuning (SFT) and direct preference optimization (DPO).


\subsection{Implementation Details of UNO}
We implement UNO using Qwen3-Embedding-0.6B as the encoding model for clustering. To fully exploit both the information contained in the original logs and the high-value elements of the extracted rules when constructing preference data, each of the two is used as contextual input to generate candidate outputs, from which the superior one is selected. For cognitive gap assessment, we use Qwen3-Reranker-0.6B~\cite{zhang2025qwen3embeddingadvancingtext} to evaluate whether each extracted rule is independent of the base LLM's own knowledge, extracting the model-generated scores to compute $\text{Dist}(\mathcal{R}_i, \mathcal{R}_i^{\text{LLM}})$. In the simulated performance verifier, we evaluate both each checkpoint and the base LLM using an LLM-as-Judge paradigm, where the judge model (the base LLM) provides evaluation rationales and scores.

\textbf{Online Evolution Settings.} We further evaluate UNO on phi-4 model in an online evolution setting to test whether it can continue learning using newly collected logs after deployment. We evenly split the training data into two independent batches. After optimizing the model on the first batch, we allow UNO to interact with the User Simulator using queries from the second batch to collect new user logs. A key consideration in online evolution is whether clustering should preserve the original centroids. We perform clustering over all user logs: if the number of clusters changes, we adopt the new clustering results; otherwise, we keep the original centroids and perform continual training or retraining based on specific performance criteria.
\label{cha:online}

\begin{table*}[!t]
\centering
\caption{Main Results in MemoryBench. "Short-Long" denotes the "Short-Input-Long-Output" task. Performance is measured using MemoryBench’s aggregate metrics ($\uparrow$): Norm-Score (min–max normalized) and Z-Score. Scores are only meaningful for relative comparison within the same model and dataset. We bold the best, underline the second best, and shade our proposed methods in gray. "Base" denotes the LLM’s initial response. ``*'' / ``**'' denotes significantly worse performance than UNO ($p<0.1$ or $p<0.05$). ``-'' means failure to generate valid output.}
\renewcommand{\arraystretch}{0.8}
\resizebox{0.95\textwidth}{!}{
\begin{tabular}{@{}ccccccccccc@{}}
\cmidrule[\heavyrulewidth](l{-6pt}r{-6pt}){1-11}
 &
   &
   &
  \multicolumn{2}{c}{\textbf{Short-Long}} &
  \multicolumn{2}{c}{\textbf{Short-Short}} &
  \multicolumn{2}{c}{\textbf{Long-Long}} &
  \multicolumn{2}{c}{\textbf{Long-Short}} \\ \cmidrule(l){4-11} 
\multirow{-2}{*}{\textbf{Model}} &
  \multirow{-2}{*}{\textbf{Type}} &
  \multirow{-2}{*}{\textbf{Method Name}} &
  \multicolumn{1}{c|}{\textbf{Norm-Score}} &
  \multicolumn{1}{c|}{\textbf{Z-Score}} &
  \multicolumn{1}{c|}{\textbf{Norm-Score}} &
  \multicolumn{1}{c|}{\textbf{Z-Score}} &
  \multicolumn{1}{c|}{\textbf{Norm-Score}} &
  \multicolumn{1}{c|}{\textbf{Z-Score}} &
  \multicolumn{1}{c|}{\textbf{Norm-Score}} &
  \textbf{Z-Score} \\ \midrule
\multicolumn{1}{c|}{} &
  \multicolumn{1}{c|}{\textbf{-}} &
  \multicolumn{1}{c|}{\textbf{Base}} &
  74.43** &
  -9.52** &
  72.35** &
  -2.31** &
  63.41\phantom{**} &
  14.87\phantom{**} &
  46.94\phantom{**} &
  6.02\phantom{**} \\ \cmidrule(l){2-11} 
\multicolumn{1}{c|}{} &
  \multicolumn{1}{c|}{} &
  \multicolumn{1}{c|}{\textbf{Embedding}} &
  73.22** &
  -17.04** &
  72.96** &
  4.21** &
  62.04*\phantom{*} &
  7.58** &
  48.41\phantom{**} &
  12.29\phantom{**} \\ \cmidrule(lr){3-3}
\multicolumn{1}{c|}{} &
  \multicolumn{1}{c|}{\multirow{-2}{*}{\textbf{RAG}}} &
  \multicolumn{1}{c|}{\textbf{BM25}} &
  74.54** &
  -8.83** &
  72.71** &
  3.50** &
  61.49** &
  6.50** &
  48.40\phantom{**} &
  10.87\phantom{**} \\ \cmidrule(l){2-11} 
\multicolumn{1}{c|}{} &
  \multicolumn{1}{c|}{} &
  \multicolumn{1}{c|}{\textbf{MemOS}} &
  72.29** &
  -23.07** &
  66.57** &
  -27.26** &
  57.95** &
  -7.71** &
  47.14\phantom{**} &
  -12.16** \\ \cmidrule(lr){3-3}
\multicolumn{1}{c|}{} &
  \multicolumn{1}{c|}{} &
  \multicolumn{1}{c|}{\textbf{ReMem}} &
  71.95** &
  -25.55** &
  70.80** &
  -7.69** &
  59.09** &
  -4.30** &
  43.78** &
  -7.70** \\ \cmidrule(lr){3-3}
\multicolumn{1}{c|}{} &
  \multicolumn{1}{c|}{} &
  \multicolumn{1}{c|}{\textbf{A-Mem}} &
  70.36** &
  -35.02** &
  70.15** &
  -10.89** &
  60.55** &
  2.22** &
  47.94\phantom{**} &
  5.91\phantom{**} \\ \cmidrule(lr){3-3}
\multicolumn{1}{c|}{} &
  \multicolumn{1}{c|}{} &
  \multicolumn{1}{c|}{\textbf{Mem0}} &
  68.25** &
  -51.93** &
  61.07** &
  -48.34** &
  58.29** &
  -5.70** &
  -\phantom{**} &
  -\phantom{**} \\ \cmidrule(lr){3-3}
\multicolumn{1}{c|}{} &
  \multicolumn{1}{c|}{\multirow{-6}{*}{\textbf{Memory}}} &
  \multicolumn{1}{c|}{\textbf{MemoryOS}} &
  74.62** &
  -8.96** &
  70.56** &
  -5.68** &
  45.96** &
  -59.89** &
  38.51** &
  -24.87** \\ \cmidrule(l){2-11} 
\multicolumn{1}{c|}{} &
  \multicolumn{1}{c|}{} &
  \multicolumn{1}{c|}{{\textbf{$\text{DPO}_\mathbf{\textit{w/ UNO Data}}$}}} &
  70.39** &
  -34.68** &
  72.89** &
  -8.77** &
  61.65** &
  6.87** &
  47.14\phantom{**} &
  2.30\phantom{**} \\ \cmidrule(lr){3-3}
\multicolumn{1}{c|}{} &
  \multicolumn{1}{c|}{} &
  \multicolumn{1}{c|}{\textbf{$\text{SFT}_\mathbf{\textit{w/ UNO Data}}$}} &
  69.95** &
  -37.85** &
  75.03\phantom{**} &
  5.42** &
  62.24*\phantom{*} &
  8.98** &
  {\ul 48.80}\phantom{**} &
  {\ul 7.00}\phantom{**} \\ \cmidrule(lr){3-3}
\multicolumn{1}{c|}{} &
  \multicolumn{1}{c|}{} &
  \multicolumn{1}{c|}{\cellcolor[HTML]{C0C0C0}\textbf{UNO-Single}} &
  \cellcolor[HTML]{C0C0C0}{\ul 76.36}** &
  \cellcolor[HTML]{C0C0C0}{\ul 2.99}*\phantom{*} &
  \cellcolor[HTML]{C0C0C0}{\ul 75.82}\phantom{**} &
  \cellcolor[HTML]{C0C0C0}{\ul 15.89}\phantom{**} &
  \cellcolor[HTML]{C0C0C0}{\ul 63.77}\phantom{**} &
  \cellcolor[HTML]{C0C0C0}{\ul 15.66}\phantom{**} &
  \cellcolor[HTML]{C0C0C0}46.81*\phantom{*} &
  \cellcolor[HTML]{C0C0C0}5.34\phantom{**} \\ \cmidrule(lr){3-3}
\multicolumn{1}{c|}{\multirow{-15}{*}{\rotatebox{90}{\textbf{Qwen3-8B}}}} &
  \multicolumn{1}{c|}{\multirow{-5}{*}{\textbf{Training}}} &
  \multicolumn{1}{c|}{\cellcolor[HTML]{C0C0C0}\textbf{UNO}} &
  \cellcolor[HTML]{C0C0C0}\textbf{77.09}\phantom{**} &
  \cellcolor[HTML]{C0C0C0}\textbf{7.16}\phantom{**} &
  \cellcolor[HTML]{C0C0C0}\textbf{76.26}\phantom{**} &
  \cellcolor[HTML]{C0C0C0}\textbf{21.54}\phantom{**} &
  \cellcolor[HTML]{C0C0C0}\textbf{64.23}\phantom{**} &
  \cellcolor[HTML]{C0C0C0}\textbf{17.74}\phantom{**} &
  \cellcolor[HTML]{C0C0C0}\textbf{49.99}\phantom{**} &
  \cellcolor[HTML]{C0C0C0}\textbf{13.49}\phantom{**} \\ \cmidrule[\lightrulewidth](l{-6pt}r{-6pt}){1-11}
\multicolumn{1}{c|}{} &
  \multicolumn{1}{c|}{\textbf{-}} &
  \multicolumn{1}{c|}{\textbf{Base}} &
  66.67** &
  -63.21** &
  72.17\phantom{**} &
  -21.81\phantom{**} &
  57.07\phantom{**} &
  -17.36\phantom{**} &
  46.46** &
  -1.59** \\ \cmidrule(l){2-11} 
\multicolumn{1}{c|}{} &
  \multicolumn{1}{c|}{} &
  \multicolumn{1}{c|}{\textbf{Embedding}} &
  66.27** &
  -64.57** &
  72.82\phantom{**} &
  -20.79\phantom{**} &
  54.43** &
  -28.63** &
  42.91** &
  -9.62** \\ \cmidrule(lr){3-3}
\multicolumn{1}{c|}{} &
  \multicolumn{1}{c|}{\multirow{-2}{*}{\textbf{RAG}}} &
  \multicolumn{1}{c|}{\textbf{BM25}} &
  69.33\phantom{**} &
  -44.72\phantom{**} &
  71.03\phantom{**} &
  -30.36** &
  53.67** &
  -31.60** &
  43.26** &
  -15.07** \\ \cmidrule(l){2-11} 
\multicolumn{1}{c|}{} &
  \multicolumn{1}{c|}{} &
  \multicolumn{1}{c|}{\textbf{MemOS}} &
  67.96** &
  -53.78** &
  67.40** &
  -40.49** &
  53.71** &
  -31.43** &
  42.83** &
  -26.43** \\ \cmidrule(lr){3-3}
\multicolumn{1}{c|}{} &
  \multicolumn{1}{c|}{} &
  \multicolumn{1}{c|}{\textbf{ReMem}} &
  66.66** &
  -61.81** &
  58.51** &
  -92.61** &
  47.72** &
  -54.90** &
  38.78** &
  -35.81** \\ \cmidrule(lr){3-3}
\multicolumn{1}{c|}{} &
  \multicolumn{1}{c|}{} &
  \multicolumn{1}{c|}{\textbf{A-Mem}} &
  67.27** &
  -57.95** &
  70.40** &
  -29.58\phantom{**} &
  53.22** &
  -32.94** &
  39.69** &
  -34.26** \\ \cmidrule(lr){3-3}
\multicolumn{1}{c|}{} &
  \multicolumn{1}{c|}{} &
  \multicolumn{1}{c|}{\textbf{Mem0}} &
  68.81*\phantom{*} &
  -48.25*\phantom{*} &
  65.11** &
  -58.86** &
  52.23** &
  -37.53** &
  -\phantom{**} &
  - \phantom{**}\\ \cmidrule(lr){3-3}
\multicolumn{1}{c|}{} &
  \multicolumn{1}{c|}{\multirow{-6}{*}{\textbf{Memory}}} &
  \multicolumn{1}{c|}{\textbf{MemoryOS}} &
  22.44** &
  -350.94** &
  49.72** &
  -256.49** &
  24.23** &
  -161.25** &
  25.59** &
  -89.25** \\ \cmidrule(l){2-11} 
\multicolumn{1}{c|}{} &
  \multicolumn{1}{c|}{} &
  \multicolumn{1}{c|}{\textbf{$\text{DPO}_\mathbf{\textit{w/ UNO Data}}$}} &
  67.17** &
  -58.71** &
  71.85\phantom{**} &
  {\ul -16.77}\phantom{**} &
  52.54** &
  -36.95** &
  32.59** &
  -54.20** \\ \cmidrule(lr){3-3}
\multicolumn{1}{c|}{} &
  \multicolumn{1}{c|}{} &
  \multicolumn{1}{c|}{\textbf{$\text{SFT}_\mathbf{\textit{w/ UNO Data}}$}} &
  66.97** &
  -60.05** &
  70.43** &
  -34.00** &
  55.63** &
  -24.44** &
  38.97** &
  -30.81** \\ \cmidrule(lr){3-3}
\multicolumn{1}{c|}{} &
  \multicolumn{1}{c|}{} &
  \multicolumn{1}{c|}{\cellcolor[HTML]{C0C0C0}\textbf{UNO-Single}} &
  \cellcolor[HTML]{C0C0C0}{\ul 69.61}** &
  \cellcolor[HTML]{C0C0C0}{\ul -43.08}** &
  \cellcolor[HTML]{C0C0C0}{\ul 72.91}\phantom{**} &
  \cellcolor[HTML]{C0C0C0}-18.04\phantom{**} &
  \cellcolor[HTML]{C0C0C0}{\ul 57.14}\phantom{**} &
  \cellcolor[HTML]{C0C0C0}{\ul -17.10}\phantom{**} &
  \cellcolor[HTML]{C0C0C0}{\ul 47.16}** &
  \cellcolor[HTML]{C0C0C0}{\ul 1.93}** \\ \cmidrule(lr){3-3}
\multicolumn{1}{c|}{\multirow{-15}{*}{\rotatebox{90}{\textbf{phi-4}}}} &
  \multicolumn{1}{c|}{\multirow{-5}{*}{\textbf{Training}}} &
  \multicolumn{1}{c|}{\cellcolor[HTML]{C0C0C0}\textbf{UNO}} &
  \cellcolor[HTML]{C0C0C0}\textbf{70.66}\phantom{**} &
  \cellcolor[HTML]{C0C0C0}\textbf{-36.69}\phantom{**} &
  \cellcolor[HTML]{C0C0C0}\textbf{73.19}\phantom{**} &
  \cellcolor[HTML]{C0C0C0}\textbf{-15.94}\phantom{**} &
  \cellcolor[HTML]{C0C0C0}\textbf{57.84}\phantom{**} &
  \cellcolor[HTML]{C0C0C0}\textbf{-13.62}\phantom{**} &
  \cellcolor[HTML]{C0C0C0}\textbf{52.60}\phantom{**} &
  \cellcolor[HTML]{C0C0C0}\textbf{18.40}\phantom{**} \\ \cmidrule[\heavyrulewidth](l{-6pt}r{-6pt}){1-11}
\end{tabular}
}
\label{tab:main}
\end{table*}

\begin{figure*}[!t]
    \centering
    \includegraphics[width=0.95\linewidth]{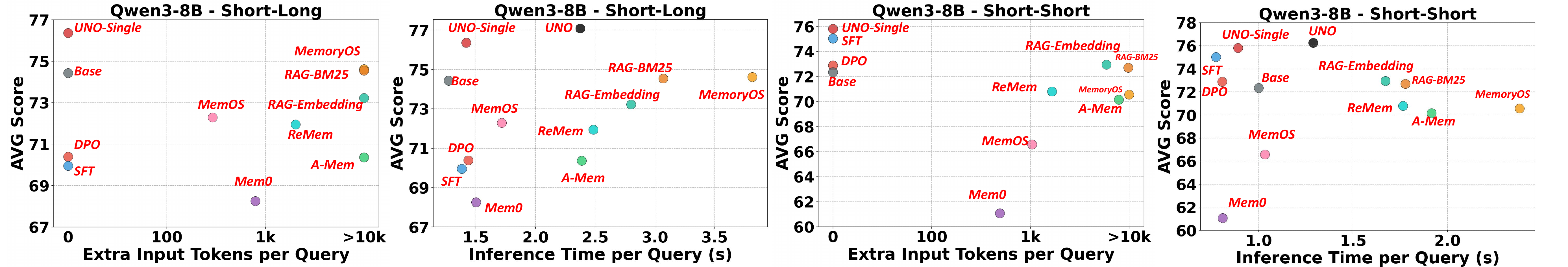}
    \caption{Performance (Norm-Score) vs. extra input tokens and inference time. Better case lie toward the upper-left. We report UNO-Single for the token metric, as full UNO's critique-and-revise step (multiple LLM calls/output tokens) is not captured here. However, the inference time evaluation includes the full UNO pipeline.}
    \label{fig:token}
\end{figure*}


\section{Experimental Results}
\subsection{Main Results}

\textbf{Main Performance.} Table~\ref{tab:main} reports the comprehensive results on MemoryBench. Across both Qwen3-8B and phi-4, as well as all four subtask datasets, UNO consistently outperforms all baseline methods. Compared with RAG and Memory systems, UNO adopts an adaptive dual-path framework, which enables more effective extraction and internalization of useful knowledge from noisy user logs.
Importantly, directly applying full DPO or SFT on the preference data constructed by UNO leads to unstable outcomes. Even after filtering via LLM-based evaluation, the model’s limited capacity makes it difficult to distinguish high-difficulty cases from noisy logs.
Figure~\ref{fig:wildfb} shows the results on real human logs (WildFB). Despite real-world noise and fully off-policy data, UNO effectively extracts useful training signals and surpasses all other baselines.
This observation further highlights the necessity of UNO’s \textbf{cognitive gap assessment} and \textbf{clustering} mechanisms for robustly handling noisy user-generated data.

\textbf{Challenges of Learning from User Logs.} On certain datasets, such as Long-Long of MemoryBench, all other baselines fail to achieve positive gains and exhibit performance degradation. This is likely due to the logical consistency required by long-context tasks, where user biases and irrelevant noise in raw logs can easily divert the model’s attention. These results strongly support UNO’s central contribution: by identifying high-risk clusters via clustering and cognitive gap assessment, and addressing them through the \textbf{reflective path}, UNO safely incorporates valuable signals from user logs while preserving the model’s capabilities.

\begin{figure}
    \centering
    \includegraphics[width=\linewidth]{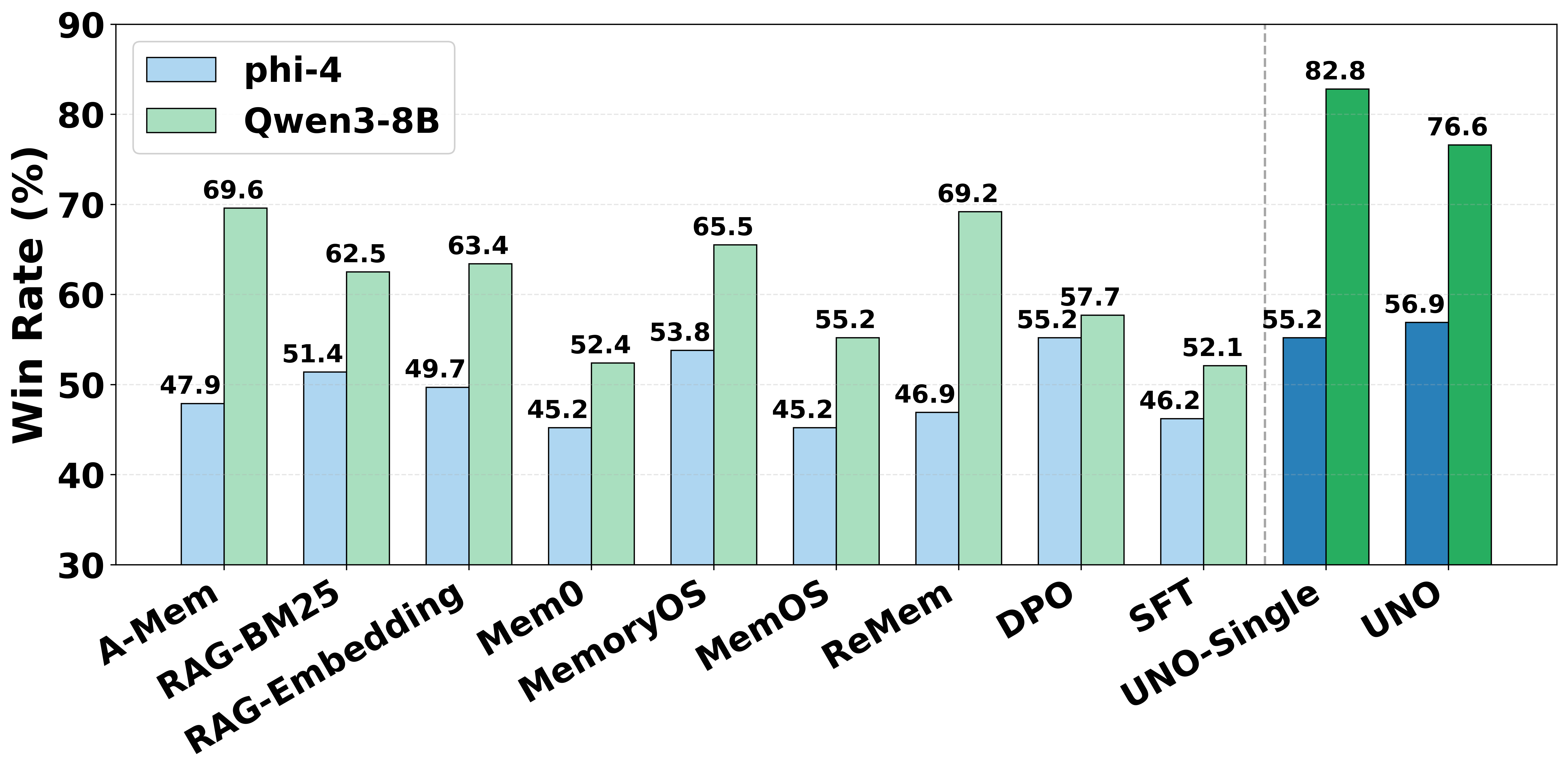}
    \caption{Main results in WildFB, measuring the win rate of generated against original responses.}
    \label{fig:wildfb}
\end{figure}

\begin{figure}[!t]
    \centering
    \includegraphics[width=0.98\linewidth]{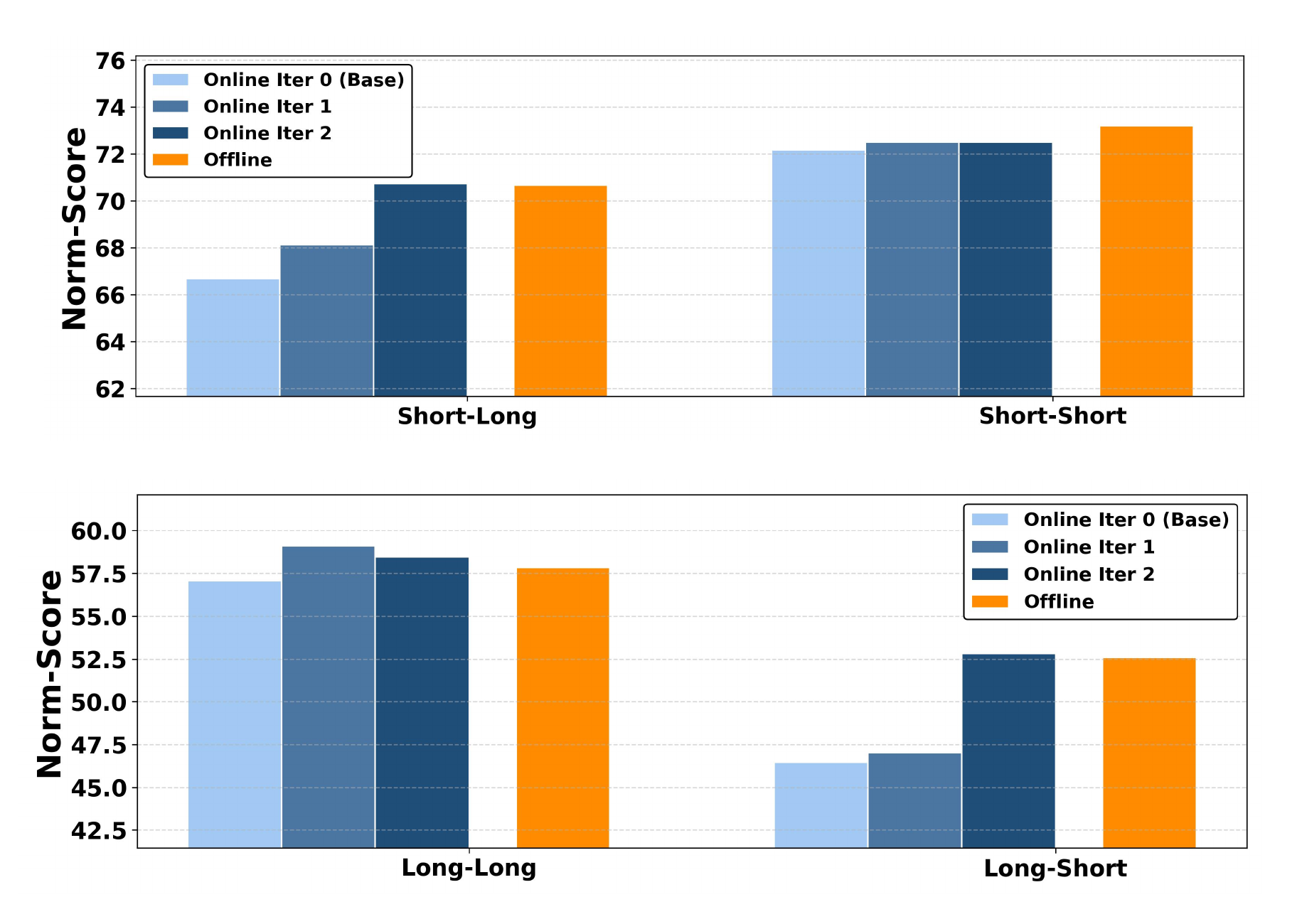}
    \caption{Results of online evolution settings on phi-4 model. "Offline" represents the setting in the main experiments.}
    \label{fig:online}
\end{figure}


\textbf{Trade-off between Efficiency and Performance.} Figure \ref{fig:token} illustrates the relationship between efficiency, measured by both extra input tokens and total inference time, and performance (Norm-Score). More details regarding the setup are provided in Appendix~\ref{apd:efficiency_details}. Because full UNO introduces additional output tokens and multiple LLM calls for clusters undergoing the reflective path, directly comparing token counts is complex. Thus, for the token analysis, we report UNO-Single (which retains only the primary path). UNO-Single occupies the optimal upper-left region of the plot, outperforming RAG and Memory approaches that rely heavily on retrieved context, while requiring zero additional input tokens. While traditional training shares this zero-overhead advantage, its performance remains highly sensitive to data noise. Furthermore, when evaluating total inference time, full UNO proves highly competitive. Although it may trigger multiple LLM calls, the actual average overhead is only 1.5 to 1.8 times the original, as these are selectively executed only for queries assigned to the reflective path. Consequently, full UNO still operates faster than most Memory baselines, while achieving the best performance among all methods. Together, UNO and UNO-Single provide an exceptional and flexible trade-off between efficiency and performance.

\textbf{Online Evolution Analysis.} We further evaluate UNO under an online evolution setting (described in Section~\ref{cha:online}) using phi-4, as shown in Figure~\ref{fig:online}. The results demonstrate that UNO naturally supports online iteration batch by batch: as user logs are incrementally incorporated, performance steadily improves on most tasks, underscoring the framework’s potential for lifelong learning. Notably, online and offline evolution exhibit complementary strengths. Offline evolution starts with a weaker initial model, often eliciting more negative feedback and thus yielding larger training sets. Online evolution, on the other hand, follows a ``learning while deploying'' paradigm. Although feedback volume may decrease as performance improves, the feedback is typically more informative.

\begin{table}[]
\centering
\caption{Ablation study of UNO's optimization framework (Qwen3-8B). UNO-Rfl and UNO-Prm refer to UNO with only the Reflective Experience Module and only the Primary Experience Module, respectively. The latter is identical to UNO-Single in the main experiments. ``Norm-S'' means Norm-Score.}
\resizebox{\columnwidth}{!}{%
\begin{tabular}{@{}lcccc@{}}
\toprule
\multirow{2}{*}{\textbf{}} & \multicolumn{2}{c}{\textbf{Short-Long}}                                      & \multicolumn{2}{c}{\textbf{Short-Short}}                \\ \cmidrule(l){2-5} 
                           & \multicolumn{1}{c|}{\textbf{Norm-S}} & \multicolumn{1}{c|}{\textbf{Z-Score}} & \multicolumn{1}{c|}{\textbf{Norm-S}} & \textbf{Z-Score} \\ \midrule
\textbf{RAG-Embedding (UNO's Rules)} & 71.50          & -29.88        & 67.74          & -22.14         \\
\textbf{RAG-BM25 (UNO's Rules)}      & 72.36          & -24.36        & 66.48          & -31.72         \\ \midrule
\textbf{UNO w/o Clustering}          & 70.95          & -32.82        & 74.73          & 9.84           \\ \midrule
\textbf{UNO-Rfl}                     & 75.44          & -3.38        & 72.79          & 3.34           \\
\textbf{UNO-Prm (UNO-Single)}        & 76.36          & 2.99          & 75.82          & 15.89          \\ \midrule
\textbf{UNO}                         & \textbf{77.09} & \textbf{7.16} & \textbf{76.26} & \textbf{21.54} \\ \bottomrule
\end{tabular}%
}
\label{tab:ablation_components}
\end{table}

\subsection{Ablation Study}

We perform comprehensive ablation studies of UNO mainly on both the Short-Long and Short-Short tasks for Qwen3-8B. Table~\ref{tab:ablation_components} summarizes the impact of removing or modifying components of the optimization framework, yielding the following insights:


\begin{itemize}[leftmargin=*]
\item \textbf{Clustering is critical}: Eliminating clustering (\textit{UNO w/o Clustering}) causes a drop in performance, which demonstrates that decoupling tasks through joint query- and rule-level features effectively simplifies the optimization process and provides a more reliable statistical basis for cognitive gap assessment.
\item \textbf{Complementarity of dual-path optimization}: We compare Primary Path only (\textit{UNO-Prm}, equivalent to \textit{UNO-Single} in Table~\ref{tab:main}) and Reflective Path only (\textit{UNO-Rfl}, where all clusters adopt Reflective Experience). Both variants are inferior to the full UNO with adaptive path selection. \textit{UNO-Prm} excels on low cognitive gap tasks but relies on the base LLM policy in noisier settings. While this avoids degradation, it is overly conservative and fails to exploit potentially valuable, high-difficulty log signals. Conversely, \textit{UNO-Rfl} increases inference latency (two generations and a Critic LoRA critique) and underperforms Primary Experience Construction on low cognitive gap tasks. A plausible reason is that the Critic LoRA learns to predict feedback from training distributions rather than reflecting genuine user feedback at test time. In low-gap cases, the noise it introduces outweighs its benefits. By selecting the appropriate path per cluster, the full UNO framework consistently outperforms single-path variants, confirming the necessity of dynamically switching strategies according to cognitive gaps and simulated verifying.
\item \textbf{Training is indispensable}: Although UNO distills user logs into compact rules, a naive alternative is to use these rules directly in a RAG pipeline, similar to the main experiments but replacing dialogue sessions with rules. As shown in Table~\ref{tab:ablation_components}, this approach performs even worse than the base model. Simply injecting rules as context can introduce noise, and rules from the training set, when retrieved via semantic matching, do not generalize to test queries. In contrast, internalizing knowledge through LoRA training or building specialized critic models provides a far more effective way to exploit training signals.
\end{itemize}

In Appendix~\ref{apd:abl}, we provide additional ablation studies, including the role of the cognitive gap assessment and ablation of the simulated verifier.

\section{Conclusion}

This work presents UNO (User log-driveN Optimization), a unified framework for the continual learning of LLMsys using raw user logs. By integrating dual-feature clustering and cognitive gap assessment, UNO effectively addresses the Signal-or-Noise Dilemma. The system adaptively selects between a Primary Experience Module for direct parameter optimization and a Reflective Experience Module for robust and critique-based refinement. Experimental results on MemoryBench confirm that UNO achieves state-of-the-art performance, which demonstrates how user logs of LLMs can be effectively harnessed as a rich resource for the continuous improvement of deployed LLM systems.

\section{Limitations}
\label{sec:limitations}

While UNO demonstrates state-of-the-art performance on existing benchmarks, our evaluations are primarily conducted on academic datasets. Although these datasets are carefully designed or derived from real chat systems, they do not fully encapsulate the extreme scale and diversity of real-world, industrial LLM services that may process millions of user sessions daily. We further discuss the feasibility of scaling UNO to such industrial scenarios in Appendix~\ref{sec:scalability}.  Furthermore, as noted in our related work, the development of user log-driven continual learning benchmarks is still in its infancy. We encourage the community to build more comprehensive, large-scale benchmarks to better evaluate LLMsys evolution in the wild.

Moreover, UNO's Reflective Path involves a multi-stage inference process. We have empirically shown that this design still outperforms most retrieval-based memory methods in terms of inference latency, offering a strong efficiency-performance trade-off (Figure~\ref{fig:token}). However, for strictly latency-constrained applications, this multi-stage generation inevitably introduces additional overhead. We outline a future direction to fold the Reflective Path into a single-stage generation process in Appendix~\ref{sec:scalability}.

\bibliography{custom}

\appendix

\section{Details of Methodology of UNO}

\subsection{Pseudo Code of Training Workflow of UNO}

We present the training pseudocode of UNO at Algorithm~\ref{alg}.

\begin{algorithm}[!t]
\caption{Training Framework of UNO}
\begin{algorithmic}[1]
\STATE \textbf{Input:} Initial policy $\pi_\theta$, User logs $\mathcal{D} = \{(q_i, y_i, \mathcal{F}_i)\}_{i=1}^N$, Thresholds $\tau^*, \gamma, \epsilon_{var}$
\STATE \textbf{Output:} Expert LoRAs $\{\Delta\theta_k\}$ and Critic LoRAs $\{\Delta\omega_k\}$

\STATE \textbf{/* Data Construction */}
\STATE $\mathcal{D}_{pref} \gets \emptyset$
\FOR{each $(q_i, y_i, \mathcal{F}_i) \in \mathcal{D}$}
    \STATE $\mathcal{R}_i \gets \text{Distill}(\mathcal{F}_i)$
    \IF{$\mathcal{R}_i \neq \emptyset$}
        \STATE Sample $y_i^w \sim \pi_\theta(\cdot \mid q_i, \mathcal{R}_i, y_i)$ 
        \STATE $\mathcal{D}_{pref} \gets \mathcal{D}_{pref} \cup \{(q_i, y_i^w, y_i, \mathcal{R}_i)\}$
    \ENDIF
\ENDFOR

\STATE \textbf{/* Dual-feature Clustering */}
\STATE $\{\mathcal{C}_k\}_{k=1}^K \gets \text{AgglomerativeClustering}(v_i, \epsilon_{var})$, where $v_i = [\text{Norm}(E(q_i)) \oplus \text{Norm}(E(\mathcal{R}_i))]$

\STATE \textbf{/* Adaptive Optimization for Different Clusters */}
\FOR{each cluster $\mathcal{C}_k$}
    \STATE \textbf{// Cognitive Gap Assessment}
    \FOR{each sample $i \in \mathcal{C}_k$}
        \STATE $\mathcal{R}_i^{LLM} \sim \pi_\theta(\cdot \mid q_i)$ 
        \STATE $g_i = \text{Dist}(\mathcal{R}_i^{LLM}, \mathcal{R}_i)$
    \ENDFOR
    \STATE $\mu_k = \frac{1}{|\mathcal{C}_k|} \sum_{i \in \mathcal{C}_k} g_i$
    
    \STATE \textit{status} $\gets$ \text{Reflective}
    \IF{$\mu_k \leq \tau^*$}
        \STATE \textbf{// Primary Experience Construction}
        \STATE $\Delta\theta_k \gets \arg\min_{\Delta} \mathbb{E}_{\mathcal{C}_k} [\mathcal{L}_{DPO}(\pi_{\theta+\Delta}; \pi_\theta) + \alpha \mathcal{L}_{NLL}(y^w)]$
        \STATE $BestWinRate \gets \text{SimulatedVerifier}(\pi_{\theta+\Delta\theta_k}, \pi_\theta, \mathcal{C}_{k}^{val})$ 
        \IF{$BestWinRate > \gamma$}
            \STATE \textit{status} $\gets$ \text{Primary}
        \ENDIF
    \ENDIF

    \IF{\textit{status} == \text{Reflective}}
        \STATE \textbf{// Reflective Experience Construction}
        \STATE \textbf{// Training a Critic LoRA for this cluster}
        \STATE $\Delta\omega_k \gets \arg\min_{\Delta} \mathbb{E}_{\mathcal{C}_k} [-\log \pi_{\theta+\Delta}(\mathcal{R}_i \mid q_i, y_i)]$
    \ENDIF
\ENDFOR
\end{algorithmic}
\label{alg}
\end{algorithm}

\subsection{Details of Dual-feature Data Clustering}
\label{apd:clustering}
In this subsection, we elaborate on the implementation details of the dual-feature clustering process introduced in Section~\ref{cha:clustering}. Our approach aims to construct a clustering space that captures both the user's semantic intent (query) and the domain-specific constraints (rules), while maintaining compatibility with inference scenarios where only the query is available.

\subsubsection{Feature Extraction and Fusion}
For a given data point containing a query $q_i$ and a corresponding rule set $\mathcal{R}_i$, we utilize a pre-trained sentence encoder (e.g., \texttt{Qwen3-Embedding-0.6B}) to extract dense representations. To ensure that both the query and the rule set contribute equally to the distance metrics during clustering, we apply L2-normalization independently to both embeddings before concatenation. The final dual-feature vector $v_i$ is formulated as:
$$ v_i = \text{Norm}\left( E(q_i) \oplus E(\mathcal{R}_i) \right) $$
where $E(\cdot)$ is the sentence encoder which produces normalized embeddings, $\text{Norm}(\cdot)$ denotes L2-normalization, and $\oplus$ is the concatenation operator. This final normalization step ensures that $v_i$ remains a unit vector, which is optimal for Euclidean distance-based clustering.

\subsubsection{Agglomerative Clustering and Ward's Linkage}
We employ Agglomerative Hierarchical Clustering to group the dual-feature vectors. A critical component of this algorithm is the linkage criterion, which determines the distance between sets of observations. We specifically adopt \textbf{Ward's Linkage} (Ward's minimum variance method)~\cite{müllner2011modernhierarchicalagglomerativeclustering}. 

Unlike other linkage methods (e.g., single or complete linkage) that measure distances between individual points in different clusters, Ward's linkage focuses on variance. At each step of the agglomerative process, the algorithm evaluates all possible pairs of clusters and merges the pair that results in the minimum increase in the total intra-cluster variance (or error sum of squares, ESS). Mathematically, the distance $d(u, v)$ between two clusters $u$ and $v$ in Ward's method is defined as the increase in ESS when they are merged:
$$ d(u, v) = \text{ESS}(u \cup v) - (\text{ESS}(u) + \text{ESS}(v)) $$
This linkage is particularly well-suited for our Euclidean space vectors, as it tends to produce compact and evenly sized clusters, effectively aligning samples with similar semantic intents and applicable rules. In our implementation, the agglomerative merging process stops once the scaled distance $d(u, v)$ exceeds a threshold of $4$.

\subsubsection{Asymmetric Centroid Calculation and Inference}
\label{apd:clusterinference}
A key innovation in our implementation is the decoupling of the clustering structure space and the inference space. While the cluster assignments $\mathcal{C}$ are derived using the dual-feature vectors $v_i$, the cluster centroids are computed \textit{exclusively} in the query embedding space. 

Specifically, for a cluster $c_k$, its centroid $\mu_k$ is calculated as the mean of the normalized query embeddings for all samples assigned to $c_k$:
$$ \mu_k = \frac{1}{|c_k|} \sum_{j \in c_k} \text{Norm}(E(q_j)) $$

This asymmetric design bridges the gap between training and inference. During the inference phase, a new user query $q_{new}$ typically arrives without rules, since the rule set is extracted from user feedback. We simply compute its query embedding $E(q_{new})$ and assign it to the cluster with the nearest centroid $\mu_k$ using Euclidean distance. Furthermore, we define a maximum distance threshold $d_{max}=1.2$, and if the distance to the nearest centroid exceeds this threshold, the query is flagged as an out-of-distribution class.

\section{Details of Theorems}
\subsection{Discussion on Assumption~\ref{assum:gap_dist}}
\label{apd:assumdiscuss}
The foundational premise of our assumption rests on the varying capacity of LLMs to differentiate noise from high-quality data across different cognitive boundaries. When the cognitive gap is small, high-quality instances generally represent incremental adjustments that align well with the model's existing knowledge representations, making them easily distinguishable from inherently unstructured noise. Conversely, as the cognitive gap widens, the instances increasingly diverge from the model's current capabilities. In these high-gap regions, we struggle to differentiate between truly informative, complex signals and meaningless random noise, resulting in the convergence of their cognitive distributions.

Empirical findings from deep learning research in natural language processing provide strong support for this intuition. In a study on dataset cartography~\cite{swayamdipta-etal-2020-dataset}, researchers map out dataset regions based on model behavior during training and identify a distinct category of instances that models find "hard-to-learn". Upon conducting qualitative analysis on these "hard-to-learn" instances, they observe that the underlying composition of this region is notably mixed. On one hand, this region captures a significant amount of mislabeled instances, which directly correspond to uninformative random noise. On the other hand, it simultaneously includes valid instances that are inherently challenging even for human annotators, representing complex, novel signals.
Crucially, because both genuine noise and valid, complex information are co-located in the exact same region from the model's perspective, the model is essentially blind to their differences. This empirical observation perfectly corroborates our assumption: in high-gap areas, the model's inability to reliably distinguish between informative novel signals and uninformative random noise causes the distributions of high-quality data and noisy data to inherently converge.

\subsection{Proof of Theorem~\ref{theo1}}
\label{apd:proof32}

\textbf{Theorem 3.2} (Noise Risk Bound)
\textit{For data with small cognitive gaps, the posterior probability of noise $P(N|g_i \leq \tau)$ is strictly bounded:}
\begin{equation}
P(N|g_i \leq \tau) \leq \frac{1}{1 + \alpha \frac{P_H(g_i \leq \tau)}{P_{N}(g_i \leq \tau)}}
\end{equation}

\begin{proof}
By Bayes’ theorem, the posterior probability that a sample is noise is given by
{\scriptsize
$$
P(N \mid g_i \leq \tau) = \frac{P(g_i \leq \tau \mid N)\, P(N)}{P(g_i \leq \tau \mid N)\, P(N) + P(g_i \leq \tau \mid H)\, P(H)} .
$$
}
Substituting $P(N)=1-\alpha$ and $P(H)=\alpha$ leads to
{\footnotesize
$$
P(N \mid g_i \leq \tau) = \frac{P_{\text{noise}}(g_i \leq \tau)(1-\alpha)}{P_{\text{noise}}(g_i \leq \tau)(1-\alpha) + P_H(g_i \leq \tau)\alpha} .
$$
}
The theorem holds trivially when $\alpha = 1$. When $\alpha < 1$, normalizing by dividing both the numerator and denominator by $P_{\text{noise}}(g_i \leq \tau)(1-\alpha)$ yields
{\footnotesize
$$
P(N \mid g_i \leq \tau) = \frac{1}{1 + \frac{\alpha}{1-\alpha} \frac{P_H(g_i \leq \tau)}{P_{\text{noise}}(g_i \leq \tau)}} .
$$
}
Since $1 - \alpha \in (0, 1]$, we have $\frac{\alpha}{1-\alpha}\geq \alpha$, which establishes Theorem~\ref{theo1}.
\end{proof}

\subsection{Proof of Theorem~\ref{theo2}}
\label{apd:proof33}

\begin{assumption}

The LLM rule-generation function $\Psi: \mathcal{Q} \to \mathcal{R}$ is $L_\Psi$-Lipschitz continuous. The semantic distance obeys the axioms of a metric (or at least the triangle inequality), and the function $Dist(\cdot, \cdot)$ used to evaluate the cognitive gap is treated as a semantic distance. Moreover, the embedding distance provides an upper bound on the semantic distance: $d_{\text{sem}}(A,B) \le C \cdot |E(A)-E(B)|$
\end{assumption}

\textbf{Theorem 3.3} (Variance Reduction via Clustering)
\textit{Under UNO’s dual-feature clustering, the intra-cluster variance of the cognitive gap $\text{Var}(g_i)$ is bounded by the cluster diameter $\epsilon$:}
\begin{equation}
\text{Var}(g_i) < [C \cdot (1 + L_\Psi) \cdot \epsilon]^2
\end{equation}
where $C$ and $L_\Psi$ are constants.

\begin{proof}
    Consider any two samples $i, j \in \mathcal{C}_k$ belonging to the same cluster. By the Lipschitz continuity of $\Psi$,
\[
d_{\text{sem}}(\mathcal{R}_i^{LLM}, \mathcal{R}_j^{LLM}) \le L_\Psi \cdot d_{\text{sem}}(q_i, q_j).
\]
For a cluster with diameter $\epsilon$,
\[
|(E(q_i) \oplus E(\mathcal R_i)) - (E(q_j) \oplus E(\mathcal R_j))| < \epsilon.
\]
This implies $|E(q_i)-E(q_j)| < \epsilon$ and $|E(\mathcal R_i)-E(\mathcal R_j)| < \epsilon$, and consequently
\[
d_{\text{sem}}(q_i, q_j) < C \cdot \epsilon,
\quad
d_{\text{sem}}(\mathcal R_i, \mathcal R_j) < C \cdot \epsilon.
\]
Thus,
{\footnotesize
\begin{equation}
\begin{aligned}
|g_i - g_j| &\leq |g_i - \text{Dist}(\mathcal{R}_i^{LLM}, \mathcal{R}_j)| \\
&\quad + |\text{Dist}(\mathcal{R}_i^{LLM}, \mathcal{R}_j) - g_j| \\
&= |\text{Dist}(\mathcal{R}_i, \mathcal{R}_i^{LLM}) - \text{Dist}(\mathcal{R}_j, \mathcal{R}_i^{LLM})| \\
&\quad + |\text{Dist}(\mathcal{R}_j, \mathcal{R}_i^{LLM}) - \text{Dist}(\mathcal{R}_j, \mathcal{R}_j^{LLM})| \\
&\leq \text{Dist}(\mathcal{R}_i, \mathcal{R}_j) + \text{Dist}(\mathcal{R}_i^{LLM}, \mathcal{R}_j^{LLM}) \\
& \le  C \cdot (1 + L_\Psi) \cdot \epsilon.
\end{aligned}
\end{equation}
}
By the definition of variance, Theorem~\ref{theo2} is proved.
\end{proof}

\section{Experimental Details}
\label{apd:detailsexp}

\subsection{Additional Evaluation Details}
\label{apd:evaluation_details}

\textbf{Introduction of MemoryBench.} MemoryBench spans multiple languages, tasks, domains, and datasets. Its log data are generated by a carefully designed and validated \textit{User Simulator} that produces interaction logs conditioned on each model’s own responses. The simulator outputs are validated via A/B testing, in which human annotators cannot distinguish simulator-generated logs from real user logs. The benchmark comprises two parts: user logs collected during the early stage (the training set) and subsequent new requests (the test set). All strategies are performed on the training set and evaluated on the test set. 

\textbf{Evaluation Metrics of MemoryBench.} Since test queries for each task originate from multiple datasets, MemoryBench first computes dataset-specific evaluation metrics and then applies either min–max normalization or z-score to obtain the final task-level performance. For both metrics, higher values correspond to better performance. We adopt the normalization parameters provided by the official repository.

\textbf{Details of Evaluation on WildFB.} We filter the dataset to include only initial user queries without prior dialogue history to ensure that user feedback remains on a single topic and to facilitate evaluation, retaining strictly English and Chinese conversations identified via language detection. We randomly sample 1,200 instances for the evolution logs (training set) and reserve the remainder for the evaluation tasks (test set), which aligns with the data scale of MemoryBench. For evaluation, we adopt WildReward-8B~\footnote{https://huggingface.co/THU-KEG/WildReward-8B} from the original WildFB paper as the evaluator and compute the win rate of the generated responses against the original responses in the dataset.

\subsection{Detailed Baseline Settings}

\begin{itemize}[leftmargin=*]
\item \textbf{Retrieval-Augmented Generation (RAG):} In our setting, we treat each dialogue session as an entry and use the test question as the search query to retrieve relevant user queries from the dialogues. We use either BM25~\cite{robertson2009probabilistic} or Qwen3-Embedding-0.6B~\cite{zhang2025qwen3embeddingadvancingtext} as the retriever and include entries relevant to the top-5 retrieved queries as the LLM context. When the context exceeds the LLM's maximum length, we truncate the documents using a bisection strategy.
\item \textbf{Memory for LLM Systems:} We evaluate MemOS~\cite{li2025memosmemoryosai}, ReMem~\cite{wei2025evomemorybenchmarkingllmagent}, A-Mem~\cite{xu2025amem}, Mem0~\cite{Mem0}, and MemoryOS~\cite{kang-etal-2025-memory}. All user logs are organized at the session level for memory storage, and the resulting systems are treated as evolved systems, uniformly using the top-5 entries. For the latter three methods, we follow all settings from the original MemoryBench paper. Although MemOS provides a complete API for the memory system, to ensure a consistent base LLM across all the methods, we use the official memory and retrieval APIs and evaluate on the test set with the same base LLM as other baselines. Note that the Long-Short dataset contains extremely long contexts, and we find that Mem0 cannot process them and cannot produce responses within a reasonable time. Thus, we do not report Mem0’s performance on this dataset. This finding is consistent with the conclusions reported in the original MemoryBench paper.
\item \textbf{Training Methods:} Using preference data constructed by UNO, we evaluate two offline training approaches: supervised fine-tuning (SFT) and DPO. All training settings follow UNO's settings. We select the epoch that achieves the lowest cross-entropy loss on the validation set as the final evaluation epoch.
\end{itemize}
\subsection{Details of UNO's Settings}
\label{app:uno_hyperparameters}

\textbf{Clustering and Training.} In agglomerative clustering, we stop merging clusters when the inter-cluster variance increment reaches 4. In the cluster matching process, distances greater than 1.2 are treated as outliers. We set the LoRA rank to 64 and the dropout rate to 0.05. Training is performed with the TRL library~\cite{vonwerra2022trl} using a learning rate of $5\times10^{-4}$ for 8 epochs. For DPO, we set the beta parameter to 0.1 and assign equal weights (0.5) to the DPO loss and the NLL loss.

\textbf{Evaluation and Thresholds.} For cognitive gap assessment, we take the minimum score across all rules as $\text{Dist}(\mathcal{R}_i, \mathcal{R}_i^{\text{LLM}})$. We simply set the threshold $\tau^*$ of the cognitive gap to 0.45, which is the mid-range of the experimental results for both models across all datasets on MemoryBench. In the LLM-as-Judge verifier, we sample the judge 3 times and average the scores (from 1 to 10). We set the win-rate threshold to 0.53 (0.03 above the base performance) and the BLEU threshold to 0.05; answers with BLEU scores below this threshold relative to the original responses are directly assigned a score of 0. We use vLLM~\cite{kwon2023efficient} for inference, and adopt the same inference settings as MemoryBench, with a temperature of 0.1.

\textbf{Online Evolution Criteria.} For unchanged clusters during online evolution, we perform continual training of the expert LoRA using incremental data, requiring the win rate to exceed the pre-evolution level by 0.03; otherwise, we retrain using the full dataset. For LoRA in the Reflective Experience Module, we retrain using the full data and require the best validation loss to decrease by more than 0.2 to ensure stable optimization. Due to the limited amount of distilled preference data for online training with Qwen3-8B in some datasets, we conduct the online experiments using phi-4.

\textbf{Prompt Templates.} The following prompt extracts raw user logs into a semi-structured set of rules.

\input{prompt/rule.tex}

The following prompt is used for revising the initital response according to the suggestions, both for data preprocessing and the reflective path at inference time.

\input{prompt/revise.tex}

The following prompt is used for judging the response according to the extracted rule set in the simulated validation module during training of the Primary Path.

\input{prompt/judge.tex}

\subsection{Details of Efficiency Evaluation}
\label{apd:efficiency_details}

For the calculation of efficiency metrics, all token counts are computed using the Qwen3-8B tokenizer. Regarding the measurement of inference time, our evaluation is conducted on GPUs with computational capabilities equivalent to the NVIDIA H100-SXM-80GB. The models are deployed using the vLLM framework distributed across 4 GPUs, with concurrency 10. We enable the batch-invariant setting during inference. 

Furthermore, our evaluation deliberately excludes the time consumed by offline processing phases, such as model training and the construction of retrieval indices. Instead, we focus exclusively on the online inference latency, which encompasses the time required for LLM generation, retrieval matching, cluster matching, and so on.

\section{Case Studies of UNO}

\input{case/inference.tex}

\section{Discussion on the Feasibility of Industrial-Grade Scalability}
\label{sec:scalability}

In our main experiments, UNO demonstrates state-of-the-art performance on existing academic-level benchmarks. However, a critical question arises when deploying UNO in real-world, industrial LLM services, where the system might process hundreds of millions of user interactions daily: \textit{How does the framework scale regarding cluster count, algorithmic complexity, and inference serving overhead?} To address this, we provide a theoretical discussion to explore the industrial feasibility of UNO, outlining proposed architectures and concepts for managing massive-scale deployments.

\paragraph{Quantitative Estimation of LoRA Modules}
Consider a hypothetical industrial scenario with $10^8$ daily user sessions. Even after rigorous pre-filtering (e.g., retaining only 1\% of sessions with explicit and actionable user feedback), the system would process approximately $10^6$ training samples. To maintain the fine-grained semantic and rule alignment that UNO requires, assuming an optimal capacity of $1,000$ to $5,000$ sessions per cluster, the system would generate approximately $200$ to $1,000$ distinct clusters. Consequently, this translates to maintaining and serving up to $1,000$ specialized LoRA modules (Expert or Critic).

\paragraph{Algorithmic Scalability of Clustering}
A primary bottleneck in scaling UNO directly is the clustering phase. The current implementation utilizes Agglomerative Clustering, which has a computational complexity ranging from $\mathcal{O}(N^2)$ to $\mathcal{O}(N^3)$. Applying this naively to $N=10^6$ samples is computationally intractable. To adapt UNO for such scales, we conceptualize and propose a two-stage hierarchical clustering pipeline as a highly scalable alternative. In the proposed first stage, a highly scalable algorithm such as an approximate nearest neighbor (ANN) search library like FAISS~\cite{douze2024faiss} could perform coarse-grained clustering to divide the massive log data into dozens of macro-clusters. In the theoretical second stage, the computationally intensive Agglomerative Clustering could be safely executed in parallel within each macro-cluster to perform fine-grained refinement, strictly bounding the maximum $N$ for any single agglomerative operation.

\paragraph{Scalable Multi-LoRA Serving}
To address the memory and routing overhead of matching incoming queries to up to $1,000$ potential LoRAs at inference time, naive sequential loading is inadequate. It is worth noting that the routing step itself is fundamentally a vector similarity matching process; since the total number of clusters is orders of magnitude smaller than the document corpora handled by standard text retrieval systems, identifying the correct cluster introduces virtually zero computational overhead. For LoRA loading, the subsequent memory and operational overhead of loading these weights can be effectively mitigated by leveraging the advanced multi-LoRA serving capabilities natively supported by modern inference frameworks like vLLM. Current vLLM implementations incorporate highly optimized multi-LoRA serving features by utilizing unified memory management and custom CUDA kernels to dynamically page active LoRA weights from CPU host memory to GPU VRAM. By fully exploiting the multi-LoRA infrastructure of modern inference frameworks, we hypothesize that UNO's multi-module architecture can efficiently route queries to the correct cluster and transition the corresponding experience module on the fly with negligible latency overhead, ensuring the feasibility and cost-effectiveness of UNO in production environments. In the small-scale experiment presented in Figure~\ref{fig:token}, we also observe that this overhead is negligible.

\paragraph{Folding the Reflective Path}
As noted in Section~\ref{sec:limitations}, the Reflective Path in UNO relies on a multi-stage inference process (initial generation, critique, and refinement). While we provide the single-stage \textit{UNO-Single} variant for strictly latency-constrained applications or industrial scenarios, an exciting avenue for future work is to fold the Reflective Path into a single-stage generation process via on-policy self-distillation~\cite{zhao2026selfdistilledreasoneronpolicyselfdistillation}. Specifically, the high-quality critique-and-refine trajectories generated offline could be used as distillation data to train a dedicated Expert LoRA. This approach may completely eliminate the multi-stage inference overhead at test time while preserving the robust performance and noise-resistance of the reflective experience.

\section{Further Ablation Study}
\label{apd:abl}
\begin{table}[]
\centering
\caption{Pre-filtering performance of the cognitive gap assessment for Qwen3-8B. The first row shows the proportion of reflective experience clusters, including those tagged during both the pre- and post-filtering stages. The second row quantifies the reduction in Primary Experience Module training attributable to pre-filtering. The final row reports the proportion of clusters identified early by pre-filtering; higher values indicate greater accuracy of pre-filtering in predicting failures of Primary Experience Module Construction.}
\resizebox{0.99\linewidth}{!}{%
\begin{tabular}{@{}lcccc@{}}
\toprule
                                    & \textbf{Short-Long} & \textbf{Short-Short} & \textbf{Long-Long} & \textbf{Long-Short} \\ \midrule
\textbf{\#Reflective Clus. / \#Clus.} & 2 / 3                 & 1 / 2                  & 1 / 4                & 2 / 3                 \\
\textbf{DPO Cost Savings}           & $\sim 53\%$         & $\sim 57\%$          & 0                  & $\sim 78\%$         \\
\textbf{Pre-Filter Recall}        & $50\%$              & $100\%$              & $0\%$              & $100\%$             \\ \bottomrule
\end{tabular}%
}
\label{tab:pre}
\end{table}

\begin{table}[]
\centering
\caption{Ablation study for simulated performance verifier. Perplexity and the DPO Chosen Reward are computed on the positive examples of the validation set.}
\resizebox{\columnwidth}{!}{%
\begin{tabular}{@{}lcccc@{}}
\toprule
\multirow{2}{*}{\textbf{Verification Criteria}}                       & \multicolumn{2}{c}{\textbf{Short-Long}} & \multicolumn{2}{c}{\textbf{Short-Short}} \\ \cmidrule(l){2-5} 
 & \multicolumn{1}{c|}{\textbf{Norm-S}} & \multicolumn{1}{c|}{\textbf{Z-S}} & \multicolumn{1}{c|}{\textbf{Norm-S}} & \textbf{Z-S} \\ \midrule
\textbf{Win-Rate in Simulated Verifier} & \textbf{77.09}      & \textbf{7.16}     & \textbf{76.26}      & \textbf{21.54}     \\
\textbf{Perplexity}                     & 71.13               & -29.49            & 72.97               & -6.49              \\
\textbf{DPO Chosen Reward}              & 76.30               & 2.07              & 72.97               & -6.49              \\
\textbf{Last Epoch}                     & 69.15               & -43.87            & 71.76               & -17.03             \\ \bottomrule
\end{tabular}
}
\label{tab:post}
\end{table}

\subsection{Cognitive Gap Assessment as Pre-Filtering}
Cognitive gap assessment and simulated performance verification serve as pre-filtering and post-filtering mechanisms for noisy user logs, respectively. Our analysis shows that even without pre-filtering, the verifier can also reliably filter out clusters with failed optimization, meaning cognitive gap assessment does not directly affect final model performance. Nevertheless, it significantly reduces the computational cost of Primary Experience Construction and simulated performance verification by accurately estimating log quality at the cluster level before training and guiding optimization decisions. Table~\ref{tab:pre} reports the number of clusters filtered at each stage, the total number of clusters produced, the reduction in DPO training volume due to cognitive gap assessment (measured by reduced training data), and the recall of pre-filtering (i.e., the proportion of clusters that should be filtered and are identified before training). These results demonstrate that cognitive gap assessment effectively evaluates log quality in advance and substantially reduces resource consumption.

\subsection{Cognitive Gap Assessment v.s. Model Confidence}

Estimating data difficulty via model confidence (e.g., Confident Learning~\cite{northcutt2022confidentlearningestimatinguncertainty}) is well-established in noisy label learning. However, directly applying token-level confidence (such as perplexity) to open-ended LLM generation often encounters calibration issues, as probabilities are confounded by generation formats and task openness rather than just data difficulty. Cognitive Gap addresses this by shifting the estimation from the probability space to the semantic reasoning space. Measuring the semantic distance between model-generated rules and those distilled from user logs is significantly more robust and computationally efficient, as it utilizes a smaller proxy model.

Figure~\ref{fig:cog} compares the classification performance for identifying Primary and Reflective Paths using Cognitive Gap versus negative log-likelihood (Qwen3-8B on MemoryBench). Since this task acts as a pre-filter, achieving 100\% recall for Primary Path clusters is crucial. At this decision boundary, model confidence misclassifies substantially more Reflective Path clusters into Primary Experience Module training, wasting computational resources. Cognitive Gap, however, yields significantly fewer errors. Furthermore, standard model confidence achieves an Area Under the Receiver Operating Characteristic curve (AUROC) of 0.8611, lower than the 0.8889 achieved by Cognitive Gap. This confirms that semantic-level metrics more reliably separate valuable feedback from noise in log-driven LLM optimization.

\begin{figure}[t]
    \centering
    \includegraphics[width=\linewidth]{fig/score_distribution.png}
    \caption{Comparison of the pre-filtering capabilities of Cognitive Gap and model confidence (mean negative log-probability). Scatter points represent individual clusters, and the dashed line indicates the decision boundary for completely filtering out clusters belonging to the Primary Path.}
    \label{fig:cog}
\end{figure}

\subsection{Simulated Verifier as Post-Filtering}

As a post-filter, the simulated performance verifier ensures quality control before deployment. We compare it with several naive validation strategies, including perplexity on positive validation samples, DPO chosen reward ($\beta \log \frac{\pi_{\theta}(y|x)}{\pi_{\text{ref}}(y|x)}$), and selecting the final training epoch. As shown in Table~\ref{tab:post}, verifier-based evaluation is more effective at identifying superior checkpoints or filtering failed clusters. Implicit metrics derived from positive samples perform poorly, likely because the verifier-based evaluation relies on generation-and-scoring, which is more robust and better aligned with generative performance.

\subsection{Ablation Studies on Key Hyperparameters}

Figure~\ref{fig:tau} presents a sensitivity analysis of the cognitive gap threshold, $\tau^*$. In the main experiments, we set $\tau^*$ to the mid-range value computed over all data from the two models, namely 0.45. With other hyperparameters fixed, we evaluate different threshold values on the MemoryBench dataset using Qwen3-8B. The results indicate that this hyperparameter is generally robust. Only on a few datasets (e.g., Long-Long) does an excessively small threshold slightly degrade performance. This robustness stems from the post-filtering mechanism: as long as the threshold is chosen conservatively (i.e., relatively large), performance remains unaffected, although the potential savings in training resources decrease. When the threshold is set too small, clusters that should be assigned to the primary path may instead be routed to the reflective path. Such misclassification has minimal impact on performance but substantially increases inference overhead.

Figure~\ref{fig:w} shows the impact of varying the win rate threshold in the Simulated Verifier. This hyperparameter likewise exhibits strong robustness. When the threshold is set exactly to 0.5, performance drops on certain datasets, such as Short-Long and Long-Long. This decline likely arises from the model’s inherent judgment capability: for samples with only marginal differences, decisions are more sensitive to noise and random perturbations, causing clusters formed at a 0.5 threshold to potentially reflect ineffective training. Conversely, setting the threshold too high may also misroute clusters that should belong to the primary path into the reflective path, thereby significantly increasing inference overhead.

\begin{figure}[t]
    \centering
    \includegraphics[width=\linewidth]{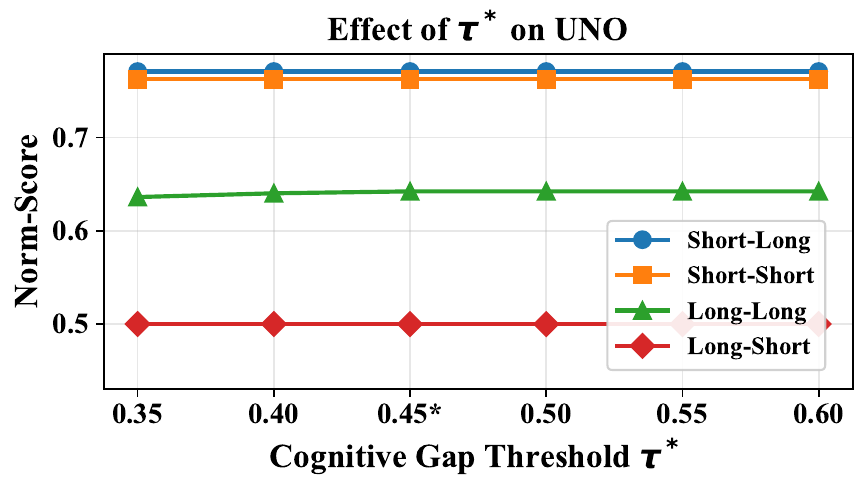}
    \caption{Ablation Study of Cognitive Gap Threshold $\tau^*$ on Qwen3-8B.}
    \label{fig:tau}
\end{figure}

\begin{figure}[t]
    \centering
    \includegraphics[width=\linewidth]{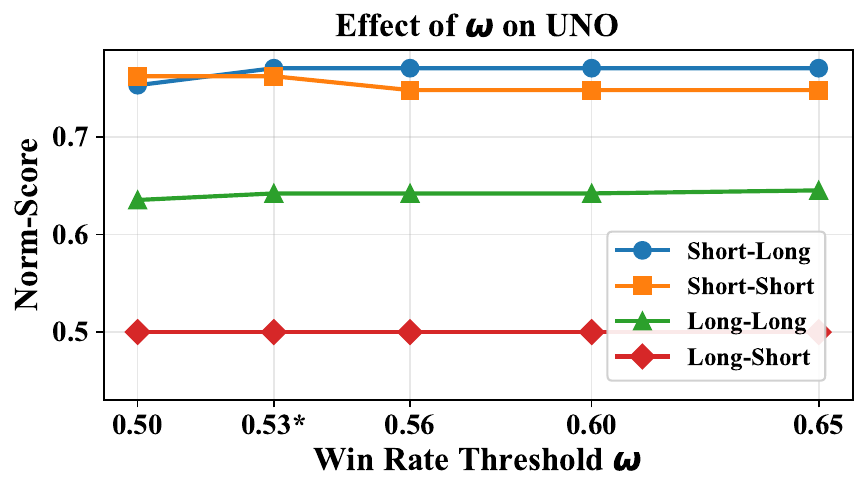}
    \caption{Ablation Study of Win Rate Threshold $w$ on Qwen3-8B.}
    \label{fig:w}
\end{figure}

\section{Detailed evaluation results on MemoryBench}

\label{apd:finegrained}

\input{tab/detail-q3}
\input{tab/detail-phi4.tex}

MemoryBench evaluates the overall performance of the LLMsys after optimization on the complete system log and therefore adopts aggregated relative metrics. Our main experiments follow this setting. In addition, we report fine-grained metrics for each sub-dataset, as presented in Table~\ref{tab:detailq3} and Table~\ref{tab:detailphi4}.

\end{document}

%% file: prompt/rule.tex
\begin{tcolorbox}[breakable,colback=lightgray!20,colframe=darkgray!80,title=System Prompt of Rule Set Generatioin from User Feedback.]
\#\# Role

You are a conversation analysis expert.
\\ \hspace*{\fill} \\ \relax
\#\# Task

Your task is to analyze a conversation snippet that includes an initial user query, the model's initial response, and subsequent user feedback. Based on your analysis, you need to summarize and output the following two key pieces of information in JSON format:
\\ \hspace*{\fill} \\ \relax
1. Whether the user is satisfied with the model's initial response.

2. Extract rules or suggestions from the user's feedback that can help the model better answer the initial query.
\\ \hspace*{\fill} \\ \relax
\#\# Input Format

You will receive a text containing the following three parts:
\\ \hspace*{\fill} \\ \relax
[Initial User Query]

[Initial Model Response]

[User Feedback] (May contain feedback from one or more rounds of conversation)
\\ \hspace*{\fill} \\ \relax
\#\# Output Requirements

You must strictly follow the JSON format below for your output:
\\ \hspace*{\fill} \\ \relax
\{

  "rules": [

    "<string>",

    "<string>",

    ...

  ]

\}
\\ \hspace*{\fill} \\ \relax
\#\#\# Field Descriptions

- rules (list of strings): A list containing all the effective suggestions or rules extracted from the user feedback.
\\ \hspace*{\fill} \\ \relax
\#\# Important Constraints
\\ \hspace*{\fill} \\ \relax
1. Only extract feedback that is directly related to the [Initial User Query]. If the user's replies deviate from the initial question (e.g., start small talk or ask a new question), ignore this irrelevant content.

2. If all feedback is unrelated to the initial question, or if the user expresses satisfaction without providing any specific suggestions, this list can be empty [].

3. Each rule should be a clear, actionable instruction. It should be in affirmative form, stating what to do rather than what not to do.
\end{tcolorbox}

%% file: prompt/revise.tex
\begin{tcolorbox}[breakable,colback=lightgray!20,colframe=darkgray!80,title=Prompt of Response Revision]
\#\# Initial Answer

\{old\_answer\}
\\ \hspace*{\fill} \\ \relax
\#\# Suggestion

\{suggestion\}
\\ \hspace*{\fill} \\ \relax
\#\# User Question

\{question\}
\\ \hspace*{\fill} \\ \relax
Follow the suggestion to revise the initial answer. Directly output the revised complete answer that accurately and naturally addresses the user's question. Don't repeat the question itself at the beginning of your response.
\end{tcolorbox}

%% file: prompt/judge.tex
\begin{tcolorbox}[breakable,colback=lightgray!20,colframe=darkgray!80,title=Prompt of LLM-AS-JUDGE of Simulated Validation]
\#\#\# ROLE

You are a rigorous and objective AI Answer Quality Evaluator.
\\ \hspace*{\fill} \\ \relax
\#\#\# TASK

Your task is to comprehensively evaluate the quality of the provided [Answer] based on the [Question] and the [Expert Suggestion]. You must output a final integer score from 1 to 10.
\\ \hspace*{\fill} \\ \relax
\#\#\# CORE INSTRUCTIONS
\\ \hspace*{\fill} \\ \relax
1.  **[Expert Suggestion] is the Primary Standard**: You must strictly adhere to the [Expert Suggestion]. It outlines the core direction, key points to include, or critical errors to avoid for a high-quality answer. The degree to which the [Answer] follows the [Expert Suggestion] is the most critical factor in determining its score.
\\ \hspace*{\fill} \\ \relax
2.  **Comprehensive Judgment**: While the [Expert Suggestion] is paramount, it may not cover all evaluation dimensions. Therefore, you must also apply your own knowledge and judgment to assess the answer based on these supplementary criteria:

    * **Accuracy**: Is the information in the answer factually correct and free of errors?

    * **Completeness**: Does the answer fully address all parts of the [Question]?

    * **Clarity \& Logic**: Is the answer well-structured, easy to understand, and logically sound?

    * **Relevance**: Does the answer stay focused on the [Question] without including irrelevant information?
\\ \hspace*{\fill} \\ \relax
\#\#\# SCORING RUBRIC
\\ \hspace*{\fill} \\ \relax
Use the following guidelines to assign an integer score from 1 to 10:
\\ \hspace*{\fill} \\ \relax
* **9-10 (Excellent):**

    * Exemplary adherence to all key points in the [Expert Suggestion].

    * The answer is highly accurate, comprehensive, and clearly articulated.

    * May provide additional valuable insights beyond the direct scope of the question.
\\ \hspace*{\fill} \\ \relax
* **7-8 (Good):**

    * Strongly adheres to the core points of the [Expert Suggestion].

    * The answer is factually correct and effectively addresses the question.

    * May have minor imperfections, such as a slight lack of detail or minor stylistic issues.
\\ \hspace*{\fill} \\ \relax
* **5-6 (Average):**

    * Partially adheres to the [Expert Suggestion] but with noticeable deviations or omissions.

    * Answers the main part of the [Question] but contains some inaccuracies, is incomplete, or lacks clarity.

    * The answer is fundamentally acceptable but has clear room for improvement.
\\ \hspace*{\fill} \\ \relax
* **3-4 (Poor):**

    * Largely disregards or contradicts the [Expert Suggestion].

    * Contains significant factual errors, logical flaws, or is substantially incomplete.

    * Fails to effectively answer the [Question].
\\ \hspace*{\fill} \\ \relax
* **1-2 (Very Poor):**

    * Completely fails to follow the [Expert Suggestion].

    * The answer is factually incorrect, irrelevant to the question, nonsensical, or potentially harmful.
\\ \hspace*{\fill} \\ \relax
\#\#\# OUTPUT FORMAT
\\ \hspace*{\fill} \\ \relax
Your output **MUST** be a single JSON object that strictly follows the format below. Do not add any extra text, comments, or explanations before or after the JSON code block.
\\ \hspace*{\fill} \\ \relax
```json

\{

  "reason": "str",

  "score": "int"

\}

```
\\ \hspace*{\fill} \\ \relax
- reason (str): Provide a detailed and specific explanation for the score you have given. In your reasoning, you must explicitly reference the [Expert Suggestion] and describe the extent to which the [Answer] adhered to it. Also, incorporate your assessment of other criteria (e.g., accuracy, completeness).

- score (int): Your integer score between 1 and 10.
\\ \hspace*{\fill} \\ \relax
\#\#\# EVALUATION CONTENT

\#\#\#\# Question

\{question\}
\\ \hspace*{\fill} \\ \relax
\#\#\#\# Expert Suggestion

\{suggestion\}
\\ \hspace*{\fill} \\ \relax
\#\#\#\# Answer

\{answer\}
\end{tcolorbox}

%% file: case/inference.tex
Figure~\ref{fig:case1} illustrates how UNO transforms raw user logs into training data. The ``Extracted Rule Set'' comprises semi-structured revision suggestions derived from user feedback. Conditioned on the initial response, the user feedback, and the rule set, the model generates a revised response. The initial and revised responses are treated as the rejected and chosen responses, respectively, for DPO training.

Figure~\ref{fig:case2} presents the inference workflow of UNO. The Reflective Path loads the Critic LoRA, produces suggestions based on the initial response, and then prompts the base LLM to generate a revision. In contrast, the Primary Path directly loads the Expert LoRA to produce the response. In this case, both approaches outperform the base LLM’s initial output. The Reflective Path incorporates more detailed considerations in the legal judgment, whereas the Primary Path appends the complete legal provisions, thereby rendering the judgment document more comprehensive.

\begin{figure*}
    \centering
    \includegraphics[width=\textwidth]{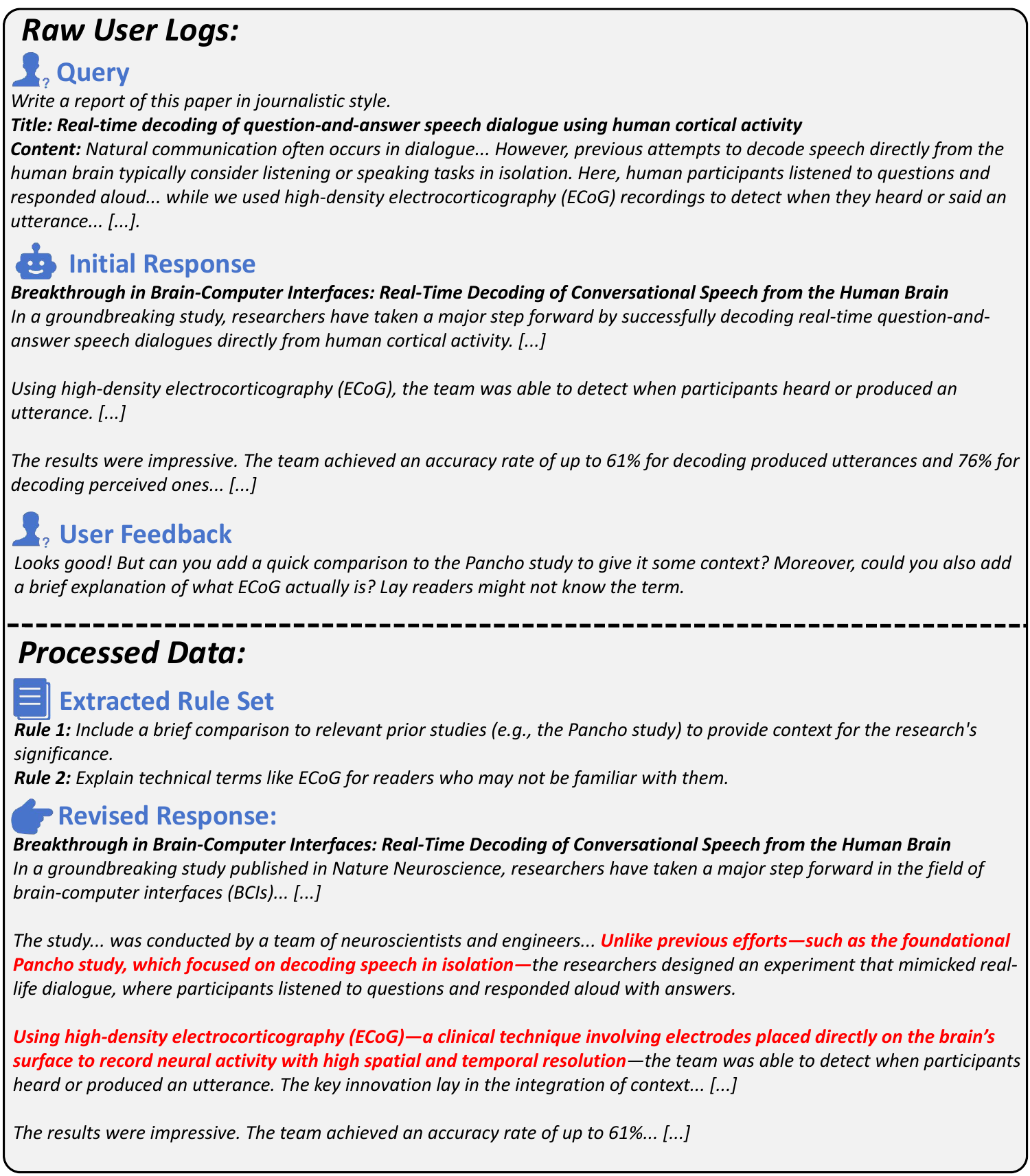}
    \caption{Case of Training Data Preprocessing. This item is from JuDGE dataset, which belongs to the Short-Input-Long-Output subset of MemoryBench.}
    \label{fig:case1}
\end{figure*}

\begin{figure*}
    \centering
    \includegraphics[width=\textwidth]{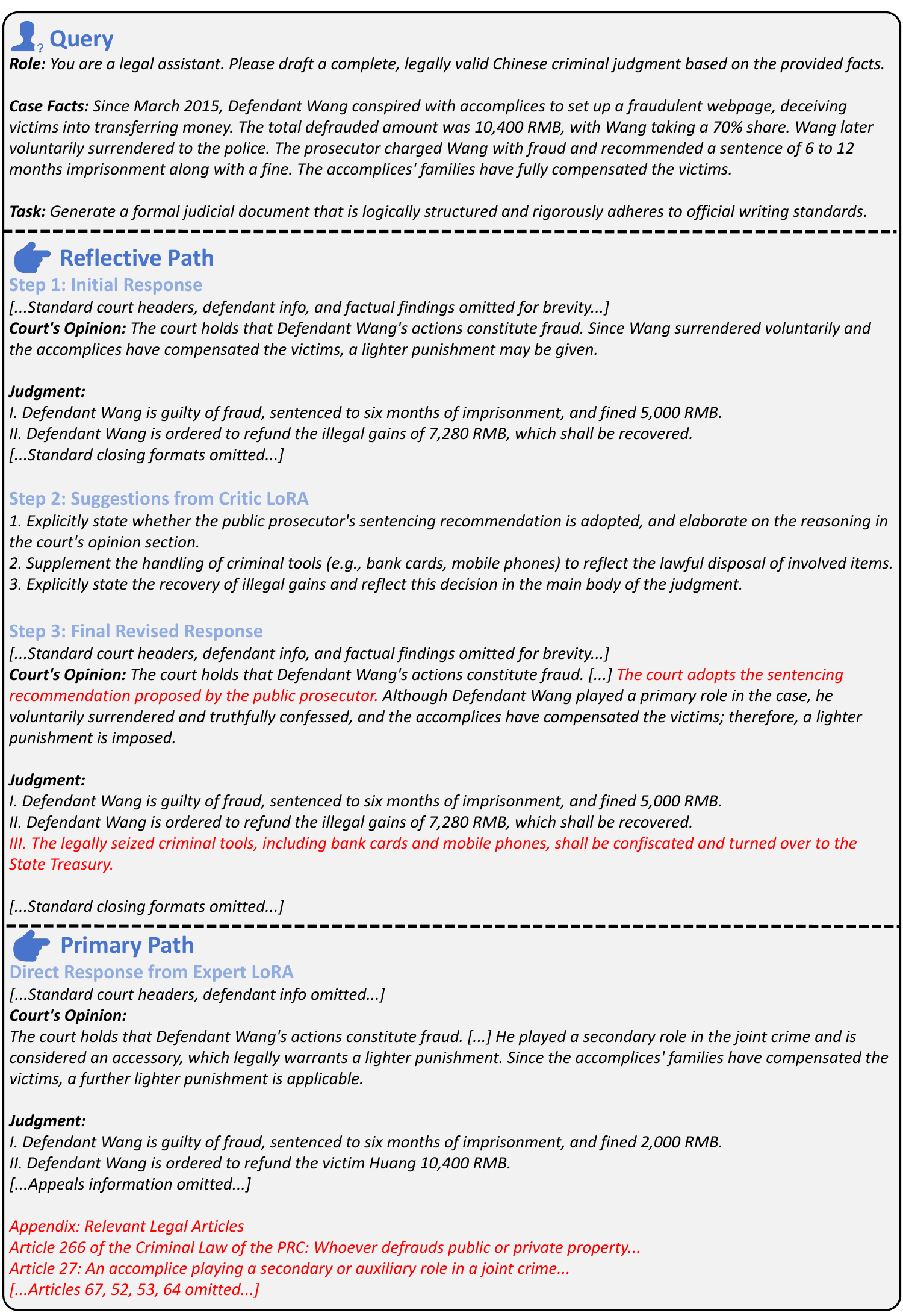}
    \caption{Case of Inference Workflow. This item is from SciTechNews dataset, which belongs to the Short-Input-Short-Output subset of MemoryBench.}
    \label{fig:case2}
\end{figure*}

\label{apd:casestudy}

%% file: tab/detail-q3.tex
\begin{table*}[]
\centering
\caption{Detailed evaluation results of LLMsys with Qwen3-8B backbone on MemoryBench. For each dataset, we report raw performances using their dataset-specific evaluation metrics. Except for metrics marked with↓, higher values indicate better performance.}
\resizebox{\textwidth}{!}{%
\begin{tabular}{@{}|c|c|c|c|c|c|c|c|c|c|c|c|c|c|c|@{}}
\toprule
\textbf{Type} &
  \textbf{Dataset} &
  \textbf{Metric} &
  \textbf{Base} &
  \textbf{Embedding} &
  \textbf{BM25} &
  \textbf{MemOS} &
  \textbf{ReMem} &
  \textbf{A-Mem} &
  \textbf{Mem0} &
  \textbf{MemoryOS} &
  \textbf{DPO} &
  \textbf{SFT} &
  \textbf{UNO-Single} &
  \textbf{UNO} \\ \midrule
\multirow{7}{*}{\rotatebox{90}{\textbf{Short-Short}}} &
  \multirow{5}{*}{\textbf{SciTechNews}} &
  \textbf{BERTScore-F1} &
  0.8179 &
  0.8154 &
  0.8151 &
  0.8192 &
  0.8165 &
  0.8171 &
  0.8192 &
  0.8127 &
  0.8242 &
  0.8256 &
  0.8179 &
  0.8170 \\ \cmidrule(l){3-15} 
 &
   &
  \textbf{CLI↓} &
  16.4672 &
  16.0973 &
  15.9736 &
  16.3992 &
  16.3043 &
  16.9455 &
  16.4013 &
  15.6473 &
  16.5301 &
  17.2997 &
  16.4672 &
  16.1465 \\ \cmidrule(l){3-15} 
 &
   &
  \textbf{DCRS↓} &
  12.6027 &
  12.5169 &
  12.4130 &
  12.6677 &
  12.5773 &
  12.7851 &
  12.6697 &
  12.3047 &
  12.7092 &
  13.1793 &
  12.6027 &
  12.4879 \\ \cmidrule(l){3-15} 
 &
   &
  \textbf{FKGL↓} &
  16.0443 &
  15.8845 &
  15.8840 &
  16.2397 &
  16.0830 &
  16.4449 &
  16.0793 &
  15.2932 &
  16.2405 &
  17.1901 &
  16.0443 &
  15.5792 \\ \cmidrule(l){3-15} 
 &
   &
  \textbf{Rouge-L} &
  0.1212 &
  0.1230 &
  0.1215 &
  0.1225 &
  0.1163 &
  0.1205 &
  0.1242 &
  0.1072 &
  0.1361 &
  0.1418 &
  0.1212 &
  0.1192 \\ \cmidrule(l){2-15} 
 &
  \textbf{LexEval-QA} &
  \textbf{Rouge-L} &
  0.1111 &
  0.1128 &
  0.1155 &
  0.0840 &
  0.1112 &
  0.1312 &
  0.1050 &
  0.1206 &
  0.1350 &
  0.1649 &
  0.1451 &
  0.1451 \\ \cmidrule(l){2-15} 
 &
  \textbf{NFCats} &
  \textbf{Score} &
  4.5600 &
  4.5600 &
  4.5000 &
  4.1400 &
  4.3600 &
  4.1000 &
  3.2600 &
  4.2000 &
  4.5200 &
  4.4200 &
  4.6000 &
  4.6000 \\ \midrule
\multirow{12}{*}{\rotatebox{90}{\textbf{Short-Long}}} &
  \textbf{\begin{tabular}[c]{@{}c@{}}HelloBench\\ -Academic\\ \&Knowledge\\ -QA\end{tabular}} &
  \textbf{Avg. Score} &
  0.8332 &
  0.8483 &
  0.8488 &
  0.8290 &
  0.8272 &
  0.8458 &
  0.8610 &
  0.8375 &
  0.8625 &
  0.8693 &
  0.8768 &
  0.8768 \\ \cmidrule(l){2-15} 
 &
  \multirow{10}{*}{\textbf{JuDGE}} &
  \textbf{Amount Score} &
  0.5650 &
  0.5481 &
  0.5993 &
  0.5870 &
  0.5059 &
  0.5520 &
  0.2975 &
  0.5554 &
  0.4302 &
  0.4992 &
  0.5650 &
  0.5585 \\ \cmidrule(l){3-15} 
 &
   &
  \textbf{Crime Precision} &
  0.9700 &
  0.9700 &
  0.9500 &
  0.9600 &
  0.9600 &
  0.9100 &
  0.9100 &
  0.9600 &
  0.9200 &
  0.9400 &
  0.9700 &
  0.9600 \\ \cmidrule(l){3-15} 
 &
   &
  \textbf{Crime Recall} &
  1.0000 &
  1.0000 &
  0.9600 &
  0.9800 &
  0.9800 &
  0.9200 &
  0.9200 &
  0.9800 &
  0.9400 &
  0.9600 &
  1.0000 &
  0.9800 \\ \cmidrule(l){3-15} 
 &
   &
  \textbf{\begin{tabular}[c]{@{}c@{}}Judge\\ Bert Score\end{tabular}} &
  0.7901 &
  0.7720 &
  0.7683 &
  0.7806 &
  0.7588 &
  0.7646 &
  0.4544 &
  0.7486 &
  0.8671 &
  0.8079 &
  0.7901 &
  0.7907 \\ \cmidrule(l){3-15} 
 &
   &
  \textbf{Judge Meteor} &
  0.4009 &
  0.3807 &
  0.3755 &
  0.4003 &
  0.3663 &
  0.4084 &
  0.2900 &
  0.3257 &
  0.5987 &
  0.4617 &
  0.4009 &
  0.4052 \\ \cmidrule(l){3-15} 
 &
   &
  \textbf{\begin{tabular}[c]{@{}c@{}}Penalcode\\ Index Precision\end{tabular}} &
  0.7294 &
  0.7200 &
  0.7065 &
  0.7189 &
  0.7010 &
  0.7592 &
  0.6011 &
  0.6721 &
  0.7970 &
  0.6572 &
  0.7294 &
  0.7413 \\ \cmidrule(l){3-15} 
 &
   &
  \textbf{\begin{tabular}[c]{@{}c@{}}Penalcode\\ Index Recall\end{tabular}} &
  0.7476 &
  0.7385 &
  0.7989 &
  0.7399 &
  0.7316 &
  0.7748 &
  0.3382 &
  0.7390 &
  0.6215 &
  0.6581 &
  0.7476 &
  0.7516 \\ \cmidrule(l){3-15} 
 &
   &
  \textbf{\begin{tabular}[c]{@{}c@{}}Reasoning\\ Bert Score\end{tabular}} &
  0.8140 &
  0.8147 &
  0.8160 &
  0.7723 &
  0.8147 &
  0.7844 &
  0.7049 &
  0.8138 &
  0.8154 &
  0.8069 &
  0.8140 &
  0.8081 \\ \cmidrule(l){3-15} 
 &
   &
  \textbf{\begin{tabular}[c]{@{}c@{}}Reasoning\\ Meteor\end{tabular}} &
  0.5101 &
  0.4861 &
  0.5133 &
  0.4580 &
  0.5007 &
  0.4974 &
  0.3621 &
  0.4895 &
  0.4940 &
  0.4946 &
  0.5101 &
  0.5186 \\ \cmidrule(l){3-15} 
 &
   &
  \textbf{Time Score} &
  0.7224 &
  0.7237 &
  0.7108 &
  0.7057 &
  0.6834 &
  0.6910 &
  0.5071 &
  0.7139 &
  0.7049 &
  0.6627 &
  0.7224 &
  0.7168 \\ \cmidrule(l){2-15} 
 &
  \textbf{WritingPrompts} &
  \textbf{Meteor} &
  0.2367 &
  0.2245 &
  0.2342 &
  0.2238 &
  0.2251 &
  0.2056 &
  0.2209 &
  0.2425 &
  0.2008 &
  0.1998 &
  0.2367 &
  0.2451 \\ \midrule
\multirow{6}{*}{\rotatebox{90}{\textbf{Long-Long}}} &
  \textbf{\begin{tabular}[c]{@{}c@{}}HelloBench\\ -Academic\\ \&Knowledge\\ -Writing\end{tabular}} &
  \textbf{Avg. Score} &
  0.8515 &
  0.8319 &
  0.8250 &
  0.8333 &
  0.8353 &
  0.8240 &
  0.8206 &
  0.7951 &
  0.8377 &
  0.8319 &
  0.8211 &
  0.8294 \\ \cmidrule(l){2-15} 
 &
  \textbf{\begin{tabular}[c]{@{}c@{}}HelloBench\\ -Creative\&Design\end{tabular}} &
  \textbf{Avg. Score} &
  0.8291 &
  0.7624 &
  0.7888 &
  0.7142 &
  0.7633 &
  0.7895 &
  0.7791 &
  0.7578 &
  0.7711 &
  0.7894 &
  0.8246 &
  0.8250 \\ \cmidrule(l){2-15} 
 &
  \textbf{LexEval-Judge} &
  \textbf{Rouge-L} &
  0.0753 &
  0.0895 &
  0.0628 &
  0.0238 &
  0.0695 &
  0.0676 &
  0.0235 &
  0.0082 &
  0.0700 &
  0.0843 &
  0.0901 &
  0.0901 \\ \cmidrule(l){2-15} 
 &
  \textbf{\begin{tabular}[c]{@{}c@{}}WritingBench\\ -Academic\\ \&Engineering\end{tabular}} &
  \textbf{Score} &
  6.9706 &
  6.7353 &
  7.0882 &
  6.5294 &
  6.0588 &
  6.8235 &
  6.6471 &
  6.0294 &
  6.7647 &
  6.7647 &
  7.1176 &
  7.1176 \\ \cmidrule(l){2-15} 
 &
  \textbf{\begin{tabular}[c]{@{}c@{}}WritingBench\\ -Creative\&Design\end{tabular}} &
  \textbf{Score} &
  6.6744 &
  6.6512 &
  6.5698 &
  6.6977 &
  6.4535 &
  6.4651 &
  6.4884 &
  4.8953 &
  6.6744 &
  6.6512 &
  6.6512 &
  6.7326 \\ \cmidrule(l){2-15} 
 &
  \textbf{\begin{tabular}[c]{@{}c@{}}WritingBench\\ -Politics\&Law\end{tabular}} &
  \textbf{Score} &
  6.8780 &
  6.8537 &
  6.8537 &
  6.6341 &
  6.6585 &
  6.6829 &
  6.6829 &
  4.6829 &
  6.8780 &
  6.8049 &
  6.9756 &
  6.9756 \\ \midrule
\multirow{7}{*}{\rotatebox{90}{\textbf{Long-Short}}} &
  \multirow{4}{*}{\textbf{IdeaBench}} &
  \textbf{Bert Score} &
  0.5605 &
  0.5676 &
  0.5675 &
  0.5418 &
  0.5632 &
  0.5576 &
  - &
  0.5553 &
  0.5681 &
  0.5672 &
  0.5605 &
  0.5655 \\ \cmidrule(l){3-15} 
 &
   &
  \textbf{\begin{tabular}[c]{@{}c@{}}LLM Feasibility\\ Ranking Score\end{tabular}} &
  0.1267 &
  0.1867 &
  0.1733 &
  0.0133 &
  0.1267 &
  0.1400 &
  - &
  0.0867 &
  0.1000 &
  0.0867 &
  0.1267 &
  0.1533 \\ \cmidrule(l){3-15} 
 &
   &
  \textbf{\begin{tabular}[c]{@{}c@{}}LLM Novelty\\ Ranking Score\end{tabular}} &
  0.7600 &
  0.7667 &
  0.7400 &
  0.5067 &
  0.7133 &
  0.6400 &
  - &
  0.4533 &
  0.7067 &
  0.6600 &
  0.7600 &
  0.7400 \\ \cmidrule(l){3-15} 
 &
   &
  \textbf{\begin{tabular}[c]{@{}c@{}}LLM \\ Rating Score\end{tabular}} &
  4.8800 &
  4.8800 &
  4.7000 &
  3.0800 &
  4.8800 &
  4.6000 &
  - &
  4.8800 &
  4.2400 &
  4.7000 &
  4.8800 &
  4.7600 \\ \cmidrule(l){2-15} 
 &
  \textbf{\begin{tabular}[c]{@{}c@{}}LexEval\\ -Summarization\end{tabular}} &
  \textbf{Rouge-L} &
  0.2237 &
  0.2306 &
  0.2256 &
  0.2191 &
  0.2205 &
  0.2351 &
  - &
  0.2400 &
  0.2191 &
  0.2295 &
  0.2218 &
  0.2218 \\ \cmidrule(l){2-15} 
 &
  \multirow{2}{*}{\textbf{LimitGen-Syn}} &
  \textbf{Accuracy} &
  0.4400 &
  0.4600 &
  0.5000 &
  0.6000 &
  0.3800 &
  0.5000 &
  - &
  0.1400 &
  0.5000 &
  0.5600 &
  0.4400 &
  0.5600 \\ \cmidrule(l){3-15} 
 &
   &
  \textbf{Rating} &
  1.1400 &
  1.2200 &
  1.2800 &
  1.8000 &
  0.9400 &
  1.3200 &
  - &
  0.3400 &
  1.3200 &
  1.4800 &
  1.1400 &
  1.5400 \\ \bottomrule
\end{tabular}%
}
\label{tab:detailq3}
\end{table*}

%% file: tab/detail-phi4.tex
\begin{table*}[]
\centering
\caption{Detailed evaluation results of LLMsys with phi-4 backbone on MemoryBench. For each dataset, we report raw performances using their dataset-specific evaluation metrics. Except for metrics marked with↓, higher values indicate better performance.}
\resizebox{\textwidth}{!}{%
\begin{tabular}{@{}|c|c|c|c|c|c|c|c|c|c|c|c|c|c|c|@{}}
\toprule
\textbf{Type} &
  \textbf{Dataset} &
  \textbf{Metric} &
  \textbf{Base} &
  \textbf{Embedding} &
  \textbf{BM25} &
  \textbf{MemOS} &
  \textbf{ReMem} &
  \textbf{A-Mem} &
  \textbf{Mem0} &
  \textbf{MemoryOS} &
  \textbf{DPO} &
  \textbf{SFT} &
  \textbf{UNO-Single} &
  \textbf{UNO} \\ \midrule
\multirow{7}{*}{\rotatebox{90}{\textbf{Short-Short}}} &
  \multirow{5}{*}{\textbf{SciTechNews}} &
  \textbf{BERTScore-F1} &
  0.8280 &
  0.8261 &
  0.8268 &
  0.8239 &
  0.8257 &
  0.8244 &
  0.8248 &
  0.8081 &
  0.8271 &
  0.8304 &
  0.8280 &
  0.8220 \\ \cmidrule(l){3-15} 
 &
   &
  \textbf{CLI↓} &
  18.2423 &
  18.2172 &
  18.4444 &
  17.2607 &
  18.5251 &
  17.9994 &
  17.5355 &
  14.7873 &
  17.5655 &
  18.4226 &
  18.2423 &
  17.6598 \\ \cmidrule(l){3-15} 
 &
   &
  \textbf{DCRS↓} &
  13.4198 &
  13.3969 &
  13.4585 &
  13.0143 &
  13.4976 &
  13.2664 &
  13.0972 &
  14.4874 &
  13.1547 &
  13.5274 &
  13.4198 &
  13.1462 \\ \cmidrule(l){3-15} 
 &
   &
  \textbf{FKGL↓} &
  17.3397 &
  17.4866 &
  17.7378 &
  16.7457 &
  17.8619 &
  17.0598 &
  16.5825 &
  14.3187 &
  16.6808 &
  17.4200 &
  17.3397 &
  16.6477 \\ \cmidrule(l){3-15} 
 &
   &
  \textbf{Rouge-L} &
  0.1419 &
  0.1377 &
  0.1411 &
  0.1307 &
  0.1378 &
  0.1350 &
  0.1301 &
  0.0715 &
  0.1412 &
  0.1475 &
  0.1419 &
  0.1285 \\ \cmidrule(l){2-15} 
 &
  \textbf{LexEval-QA} &
  \textbf{Rouge-L} &
  0.1471 &
  0.1494 &
  0.1468 &
  0.1466 &
  0.0375 &
  0.1508 &
  0.1434 &
  0.1398 &
  0.1442 &
  0.1450 &
  0.1533 &
  0.1533 \\ \cmidrule(l){2-15} 
 &
  \textbf{NFCats} &
  \textbf{Score} &
  4.4600 &
  4.5400 &
  4.3800 &
  3.8800 &
  4.0600 &
  4.2200 &
  3.7600 &
  3.8200 &
  4.3600 &
  4.3400 &
  4.4800 &
  4.5000 \\ \midrule
\multirow{12}{*}{\rotatebox{90}{\textbf{Short-Long}}} &
  \textbf{\begin{tabular}[c]{@{}c@{}}HelloBench\\ -Academic\\ \&Knowledge\\ -QA\end{tabular}} &
  \textbf{Avg. Score} &
  0.8479 &
  0.8354 &
  0.8508 &
  0.8616 &
  0.8155 &
  0.8475 &
  0.8743 &
  0.6752 &
  0.8524 &
  0.8679 &
  0.8700 &
  0.8700 \\ \cmidrule(l){2-15} 
 &
  \multirow{10}{*}{\textbf{JuDGE}} &
  \textbf{Amount Score} &
  0.2725 &
  0.2823 &
  0.3266 &
  0.3192 &
  0.3198 &
  0.3069 &
  0.3256 &
  0.0000 &
  0.2765 &
  0.3187 &
  0.3095 &
  0.3095 \\ \cmidrule(l){3-15} 
 &
   &
  \textbf{Crime Precision} &
  0.8500 &
  0.8500 &
  0.8400 &
  0.8800 &
  0.8500 &
  0.8800 &
  0.9000 &
  0.0000 &
  0.8900 &
  0.8800 &
  0.8300 &
  0.8300 \\ \cmidrule(l){3-15} 
 &
   &
  \textbf{Crime Recall} &
  0.8600 &
  0.8500 &
  0.8500 &
  0.9000 &
  0.8600 &
  0.9000 &
  0.9200 &
  0.0000 &
  0.9000 &
  0.8900 &
  0.8600 &
  0.8600 \\ \cmidrule(l){3-15} 
 &
   &
  \textbf{\begin{tabular}[c]{@{}c@{}}Judge\\ Bert Score\end{tabular}} &
  0.4613 &
  0.4909 &
  0.5412 &
  0.4561 &
  0.5276 &
  0.5823 &
  0.5357 &
  0.0000 &
  0.6531 &
  0.4135 &
  0.5941 &
  0.5941 \\ \cmidrule(l){3-15} 
 &
   &
  \textbf{Judge Meteor} &
  0.2767 &
  0.2529 &
  0.2748 &
  0.3023 &
  0.2633 &
  0.2626 &
  0.2819 &
  0.0000 &
  0.3128 &
  0.2836 &
  0.2994 &
  0.2994 \\ \cmidrule(l){3-15} 
 &
   &
  \textbf{\begin{tabular}[c]{@{}c@{}}Penalcode\\ Index Precision\end{tabular}} &
  0.6094 &
  0.6297 &
  0.6617 &
  0.5656 &
  0.5883 &
  0.6373 &
  0.5579 &
  0.0000 &
  0.5940 &
  0.4660 &
  0.6160 &
  0.6160 \\ \cmidrule(l){3-15} 
 &
   &
  \textbf{\begin{tabular}[c]{@{}c@{}}Penalcode\\ Index Recall\end{tabular}} &
  0.3502 &
  0.3355 &
  0.4085 &
  0.3742 &
  0.3025 &
  0.3140 &
  0.4000 &
  0.0000 &
  0.3678 &
  0.3894 &
  0.3752 &
  0.3752 \\ \cmidrule(l){3-15} 
 &
   &
  \textbf{\begin{tabular}[c]{@{}c@{}}Reasoning\\ Bert Score\end{tabular}} &
  0.6964 &
  0.6930 &
  0.7142 &
  0.7110 &
  0.6901 &
  0.7289 &
  0.7180 &
  0.4035 &
  0.7324 &
  0.6979 &
  0.7269 &
  0.7269 \\ \cmidrule(l){3-15} 
 &
   &
  \textbf{\begin{tabular}[c]{@{}c@{}}Reasoning\\ Meteor\end{tabular}} &
  0.3206 &
  0.3493 &
  0.3735 &
  0.3552 &
  0.3422 &
  0.3532 &
  0.3576 &
  0.0000 &
  0.3246 &
  0.3840 &
  0.3577 &
  0.3577 \\ \cmidrule(l){3-15} 
 &
   &
  \textbf{Time Score} &
  0.4384 &
  0.5031 &
  0.5436 &
  0.5142 &
  0.5296 &
  0.5209 &
  0.4900 &
  0.0000 &
  0.5177 &
  0.4729 &
  0.5333 &
  0.5333 \\ \cmidrule(l){2-15} 
 &
  \textbf{WritingPrompts} &
  \textbf{Meteor} &
  0.2254 &
  0.2150 &
  0.2267 &
  0.2188 &
  0.2192 &
  0.2159 &
  0.2200 &
  0.0535 &
  0.2157 &
  0.2111 &
  0.2254 &
  0.2342 \\ \midrule
\multirow{6}{*}{\rotatebox{90}{\textbf{Long-Long}}} &
  \textbf{\begin{tabular}[c]{@{}c@{}}HelloBench\\ -Academic\\ \&Knowledge\\ -Writing\end{tabular}} &
  \textbf{Avg. Score} &
  0.8186 &
  0.8289 &
  0.7760 &
  0.8642 &
  0.8000 &
  0.8431 &
  0.7799 &
  0.3147 &
  0.8196 &
  0.8314 &
  0.8191 &
  0.8260 \\ \cmidrule(l){2-15} 
 &
  \textbf{\begin{tabular}[c]{@{}c@{}}HelloBench\\ -Creative\&Design\end{tabular}} &
  \textbf{Avg. Score} &
  0.8027 &
  0.8016 &
  0.8289 &
  0.7767 &
  0.7709 &
  0.8030 &
  0.7941 &
  0.4096 &
  0.7576 &
  0.8129 &
  0.7956 &
  0.8278 \\ \cmidrule(l){2-15} 
 &
  \textbf{LexEval-Judge} &
  \textbf{Rouge-L} &
  0.1629 &
  0.1601 &
  0.1582 &
  0.1402 &
  0.0834 &
  0.1391 &
  0.1396 &
  0.0725 &
  0.1474 &
  0.1855 &
  0.1629 &
  0.1626 \\ \cmidrule(l){2-15} 
 &
  \textbf{\begin{tabular}[c]{@{}c@{}}WritingBench\\ -Academic\\ \&Engineering\end{tabular}} &
  \textbf{Score} &
  6.0588 &
  5.5294 &
  5.4412 &
  5.5294 &
  5.0000 &
  5.0294 &
  5.2353 &
  3.6765 &
  5.3824 &
  5.2647 &
  6.1471 &
  6.1471 \\ \cmidrule(l){2-15} 
 &
  \textbf{\begin{tabular}[c]{@{}c@{}}WritingBench\\ -Creative\&Design\end{tabular}} &
  \textbf{Score} &
  5.5581 &
  5.4302 &
  5.3140 &
  5.2209 &
  5.0581 &
  5.3372 &
  5.2209 &
  3.6163 &
  5.2326 &
  5.3488 &
  5.6163 &
  5.6744 \\ \cmidrule(l){2-15} 
 &
  \textbf{\begin{tabular}[c]{@{}c@{}}WritingBench\\ -Politics\&Law\end{tabular}} &
  \textbf{Score} &
  5.1463 &
  4.4146 &
  4.5854 &
  4.8293 &
  4.2927 &
  4.7073 &
  4.8780 &
  3.6585 &
  4.7073 &
  4.7561 &
  5.0488 &
  4.9512 \\ \midrule
\multirow{7}{*}{\rotatebox{90}{\textbf{Long-Short}}} &
  \multirow{4}{*}{\textbf{IdeaBench}} &
  \textbf{Bert Score} &
  0.5640 &
  0.5628 &
  0.5654 &
  0.5476 &
  0.5621 &
  0.5496 &
  - &
  0.5314 &
  0.5620 &
  0.5685 &
  0.5640 &
  0.5660 \\ \cmidrule(l){3-15} 
 &
   &
  \textbf{\begin{tabular}[c]{@{}c@{}}LLM Feasibility\\ Ranking Score\end{tabular}} &
  0.1200 &
  0.1400 &
  0.0800 &
  0.0000 &
  0.1400 &
  0.0733 &
  - &
  0.0000 &
  0.0267 &
  0.0667 &
  0.1200 &
  0.0967 \\ \cmidrule(l){3-15} 
 &
   &
  \textbf{\begin{tabular}[c]{@{}c@{}}LLM Novelty\\ Ranking Score\end{tabular}} &
  0.6667 &
  0.6600 &
  0.5733 &
  0.0467 &
  0.6400 &
  0.1133 &
  - &
  0.0067 &
  0.2067 &
  0.3467 &
  0.6667 &
  0.6800 \\ \cmidrule(l){3-15} 
 &
   &
  \textbf{\begin{tabular}[c]{@{}c@{}}LLM \\ Rating Score\end{tabular}} &
  4.5200 &
  4.3400 &
  4.6200 &
  4.3600 &
  4.7200 &
  4.4200 &
  - &
  4.4000 &
  4.5600 &
  4.7200 &
  4.5200 &
  4.5400 \\ \cmidrule(l){2-15} 
 &
  \textbf{\begin{tabular}[c]{@{}c@{}}LexEval\\ -Summarization\end{tabular}} &
  \textbf{Rouge-L} &
  0.2212 &
  0.2139 &
  0.2124 &
  0.2277 &
  0.1304 &
  0.2211 &
  - &
  0.1612 &
  0.2164 &
  0.2185 &
  0.2307 &
  0.2307 \\ \cmidrule(l){2-15} 
 &
  \multirow{2}{*}{\textbf{LimitGen-Syn}} &
  \textbf{Accuracy} &
  0.5000 &
  0.3400 &
  0.4200 &
  0.5000 &
  0.4200 &
  0.3800 &
  - &
  0.0000 &
  0.0600 &
  0.2600 &
  0.5000 &
  0.6600 \\ \cmidrule(l){3-15} 
 &
   &
  \textbf{Rating} &
  1.3000 &
  0.8400 &
  1.1000 &
  1.3400 &
  1.1000 &
  0.9600 &
  - &
  0.0000 &
  0.1400 &
  0.7000 &
  1.3000 &
  1.9000 \\ \bottomrule
\end{tabular}%
}
\label{tab:detailphi4}
\end{table*}

%% file: acl_latex.bbl
\begin{thebibliography}{47}
\providecommand{\natexlab}[1]{#1}

\bibitem[{Abdin et~al.(2024)Abdin, Aneja, Behl, Bubeck, Eldan, Gunasekar,
  Harrison, Hewett, Javaheripi, Kauffmann, Lee, Lee, Li, Liu, Mendes, Nguyen,
  Price, de~Rosa, Saarikivi, Salim, Shah, Wang, Ward, Wu, Yu, Zhang, and
  Zhang}]{abdin2024phi4technicalreport}
Marah Abdin, Jyoti Aneja, Harkirat Behl, Sébastien Bubeck, Ronen Eldan, Suriya
  Gunasekar, Michael Harrison, Russell~J. Hewett, Mojan Javaheripi, Piero
  Kauffmann, James~R. Lee, Yin~Tat Lee, Yuanzhi Li, Weishung Liu, Caio C.~T.
  Mendes, Anh Nguyen, Eric Price, Gustavo de~Rosa, Olli Saarikivi, and 8
  others. 2024.
\newblock \href {https://arxiv.org/abs/2412.08905} {Phi-4 technical report}.
\newblock \emph{Preprint}, arXiv:2412.08905.

\bibitem[{Ai et~al.(2025)Ai, Tang, Wang, Long, Su, and
  Liu}]{ai2025memorybenchbenchmarkmemorycontinual}
Qingyao Ai, Yichen Tang, Changyue Wang, Jianming Long, Weihang Su, and Yiqun
  Liu. 2025.
\newblock \href {https://arxiv.org/abs/2510.17281} {Memorybench: A benchmark
  for memory and continual learning in llm systems}.
\newblock \emph{Preprint}, arXiv:2510.17281.

\bibitem[{Chhikara et~al.(2025)Chhikara, Khant, Aryan, Singh, and Yadav}]{Mem0}
Prateek Chhikara, Dev Khant, Saket Aryan, Taranjeet Singh, and Deshraj Yadav.
  2025.
\newblock Mem0: Building production-ready ai agents with scalable long-term
  memory.
\newblock \emph{arXiv preprint arXiv:2504.19413}.

\bibitem[{Croft et~al.(2009)Croft, Metzler, and Strohman}]{croft2010search}
W.~Bruce Croft, Donald Metzler, and Trevor Strohman. 2009.
\newblock \href {http://www.search-engines-book.com/} {\emph{Search Engines -
  Information Retrieval in Practice}}.
\newblock Pearson Education.

\bibitem[{Dewey(2012)}]{dewey2012experience}
John Dewey. 2012.
\newblock \emph{Experience and nature}.
\newblock Courier Corporation.

\bibitem[{Douze et~al.(2024)Douze, Guzhva, Deng, Johnson, Szilvasy, Mazaré,
  Lomeli, Hosseini, and Jégou}]{douze2024faiss}
Matthijs Douze, Alexandr Guzhva, Chengqi Deng, Jeff Johnson, Gergely Szilvasy,
  Pierre-Emmanuel Mazaré, Maria Lomeli, Lucas Hosseini, and Hervé Jégou.
  2024.
\newblock \href {https://arxiv.org/abs/2401.08281} {The faiss library}.

\bibitem[{Fang et~al.(2024)Fang, Zhan, Ai, Mao, Su, Chen, and
  Liu}]{fang2024scaling}
Yan Fang, Jingtao Zhan, Qingyao Ai, Jiaxin Mao, Weihang Su, Jia Chen, and Yiqun
  Liu. 2024.
\newblock Scaling laws for dense retrieval.
\newblock In \emph{Proceedings of the 47th International ACM SIGIR Conference
  on Research and Development in Information Retrieval}, pages 1339--1349.

\bibitem[{Feng et~al.(2025)Feng, Wang, Zhou, Wang, Zhan, Li, Li, and
  Zhu}]{feng2025evoagentselfevolvingagentcontinual}
Tongtong Feng, Xin Wang, Zekai Zhou, Ren Wang, Yuwei Zhan, Guangyao Li, Qing
  Li, and Wenwu Zhu. 2025.
\newblock \href {https://arxiv.org/abs/2502.05907} {Evoagent: Self-evolving
  agent with continual world model for long-horizon tasks}.
\newblock \emph{Preprint}, arXiv:2502.05907.

\bibitem[{{Gao} et~al.(2026){Gao}, Geng, Hua, Hu, Juan, Liu, Liu, Qiu, Qi, Wu,
  Wang, Xiao, Zhou, Zhang, Zhang, Xiang, Fang, Zhao, Liu, Ren, Qian, Wang, Hu,
  Wang, Wu, Ji, and Wang}]{gao2025surveyselfevolvingagentspath}
{Huan-ang} {Gao}, Jiayi Geng, Wenyue Hua, Mengkang Hu, Xinzhe Juan, Hongzhang
  Liu, Shilong Liu, Jiahao Qiu, Xuan Qi, Yiran Wu, Hongru Wang, Han Xiao,
  Yuhang Zhou, Shaokun Zhang, Jiayi Zhang, Jinyu Xiang, Yixiong Fang, Qiwen
  Zhao, Dongrui Liu, and 8 others. 2026.
\newblock \href {https://arxiv.org/abs/2507.21046} {A survey of self-evolving
  agents: What, when, how, and where to evolve on the path to artificial super
  intelligence}.
\newblock \emph{Preprint}, arXiv:2507.21046.

\bibitem[{Han et~al.(2025)Han, Chen, Sankararaman, Peng, Xu, Helenowski, Peng,
  Kumar, Wang, Fang, and Talebzadeh}]{han2025reinforcementlearninguserfeedback}
Eric Han, Jun Chen, Karthik~Abinav Sankararaman, Xiaoliang Peng, Tengyu Xu,
  Eryk Helenowski, Kaiyan Peng, Mrinal Kumar, Sinong Wang, Han Fang, and Arya
  Talebzadeh. 2025.
\newblock \href {https://arxiv.org/abs/2505.14946} {Reinforcement learning from
  user feedback}.
\newblock \emph{Preprint}, arXiv:2505.14946.

\bibitem[{Hu et~al.(2022)Hu, Shen, Wallis, Allen{-}Zhu, Li, Wang, Wang, and
  Chen}]{lora}
Edward~J. Hu, Yelong Shen, Phillip Wallis, Zeyuan Allen{-}Zhu, Yuanzhi Li,
  Shean Wang, Lu~Wang, and Weizhu Chen. 2022.
\newblock \href {https://openreview.net/forum?id=nZeVKeeFYf9} {Lora: Low-rank
  adaptation of large language models}.
\newblock In \emph{The Tenth International Conference on Learning
  Representations, {ICLR} 2022, Virtual Event, April 25-29, 2022}.
  OpenReview.net.

\bibitem[{Joachims(2002)}]{searchengine}
Thorsten Joachims. 2002.
\newblock \href {https://doi.org/10.1145/775047.775067} {Optimizing search
  engines using clickthrough data}.
\newblock In \emph{Proceedings of the Eighth ACM SIGKDD International
  Conference on Knowledge Discovery and Data Mining}, KDD '02, page 133–142,
  New York, NY, USA. Association for Computing Machinery.

\bibitem[{Kang et~al.(2025)Kang, Ji, Zhao, and Bai}]{kang-etal-2025-memory}
Jiazheng Kang, Mingming Ji, Zhe Zhao, and Ting Bai. 2025.
\newblock \href {https://doi.org/10.18653/v1/2025.emnlp-main.1318} {Memory {OS}
  of {AI} agent}.
\newblock In \emph{Proceedings of the 2025 Conference on Empirical Methods in
  Natural Language Processing}, pages 25961--25970, Suzhou, China. Association
  for Computational Linguistics.

\bibitem[{Kaplan et~al.(2020)Kaplan, McCandlish, Henighan, Brown, Chess, Child,
  Gray, Radford, Wu, and Amodei}]{kaplan2020scalinglawsneurallanguage}
Jared Kaplan, Sam McCandlish, Tom Henighan, Tom~B. Brown, Benjamin Chess, Rewon
  Child, Scott Gray, Alec Radford, Jeffrey Wu, and Dario Amodei. 2020.
\newblock \href {https://arxiv.org/abs/2001.08361} {Scaling laws for neural
  language models}.
\newblock \emph{Preprint}, arXiv:2001.08361.

\bibitem[{Kelly and Teevan(2003)}]{userfeedback}
Diane Kelly and Jaime Teevan. 2003.
\newblock \href {https://doi.org/10.1145/959258.959260} {Implicit feedback for
  inferring user preference: a bibliography}.
\newblock \emph{SIGIR Forum}, 37(2):18–28.

\bibitem[{Kwon et~al.(2023)Kwon, Li, Zhuang, Sheng, Zheng, Yu, Gonzalez, Zhang,
  and Stoica}]{kwon2023efficient}
Woosuk Kwon, Zhuohan Li, Siyuan Zhuang, Ying Sheng, Lianmin Zheng, Cody~Hao Yu,
  Joseph~E. Gonzalez, Hao Zhang, and Ion Stoica. 2023.
\newblock Efficient memory management for large language model serving with
  pagedattention.
\newblock In \emph{Proceedings of the ACM SIGOPS 29th Symposium on Operating
  Systems Principles}.

\bibitem[{Levine et~al.(2020)Levine, Kumar, Tucker, and
  Fu}]{levine2020offlinereinforcementlearningtutorial}
Sergey Levine, Aviral Kumar, George Tucker, and Justin Fu. 2020.
\newblock \href {https://arxiv.org/abs/2005.01643} {Offline reinforcement
  learning: Tutorial, review, and perspectives on open problems}.
\newblock \emph{Preprint}, arXiv:2005.01643.

\bibitem[{Lewis et~al.(2020)Lewis, Perez, Piktus, Petroni, Karpukhin, Goyal,
  K\"{u}ttler, Lewis, Yih, Rockt\"{a}schel, Riedel, and
  Kiela}]{lewis2020retrieval}
Patrick Lewis, Ethan Perez, Aleksandra Piktus, Fabio Petroni, Vladimir
  Karpukhin, Naman Goyal, Heinrich K\"{u}ttler, Mike Lewis, Wen-tau Yih, Tim
  Rockt\"{a}schel, Sebastian Riedel, and Douwe Kiela. 2020.
\newblock Retrieval-augmented generation for knowledge-intensive nlp tasks.
\newblock \emph{Advances in Neural Information Processing Systems},
  33:9459--9474.

\bibitem[{Li et~al.(2025{\natexlab{a}})Li, Jiang, Huang, Beigi, Zhao, Tan,
  Bhattacharjee, Jiang, Chen, Wu, Shu, Cheng, and
  Liu}]{li2025generationjudgmentopportunitieschallenges}
Dawei Li, Bohan Jiang, Liangjie Huang, Alimohammad Beigi, Chengshuai Zhao, Zhen
  Tan, Amrita Bhattacharjee, Yuxuan Jiang, Canyu Chen, Tianhao Wu, Kai Shu,
  Lu~Cheng, and Huan Liu. 2025{\natexlab{a}}.
\newblock \href {https://arxiv.org/abs/2411.16594} {From generation to
  judgment: Opportunities and challenges of llm-as-a-judge}.
\newblock \emph{Preprint}, arXiv:2411.16594.

\bibitem[{Li et~al.(2025{\natexlab{b}})Li, Xi, Li, Chen, Chen, Song, Niu, Wang,
  Yang, Tang, Yu, Zhao, Wang, Liu, Lin, Wang, Huo, Chen, Chen, Li, Tao, Lai,
  Wu, Tang, Wang, Fan, Zhang, Zhang, Yan, Yang, Xu, Xu, Chen, Wang, Yang,
  Zhang, Xu, Chen, and Xiong}]{li2025memosmemoryosai}
Zhiyu Li, Chenyang Xi, Chunyu Li, Ding Chen, Boyu Chen, Shichao Song, Simin
  Niu, Hanyu Wang, Jiawei Yang, Chen Tang, Qingchen Yu, Jihao Zhao, Yezhaohui
  Wang, Peng Liu, Zehao Lin, Pengyuan Wang, Jiahao Huo, Tianyi Chen, Kai Chen,
  and 20 others. 2025{\natexlab{b}}.
\newblock \href {https://arxiv.org/abs/2507.03724} {Memos: A memory os for ai
  system}.
\newblock \emph{Preprint}, arXiv:2507.03724.

\bibitem[{Maharana et~al.(2024)Maharana, Lee, Tulyakov, Bansal, Barbieri, and
  Fang}]{maharana-etal-2024-evaluating}
Adyasha Maharana, Dong-Ho Lee, Sergey Tulyakov, Mohit Bansal, Francesco
  Barbieri, and Yuwei Fang. 2024.
\newblock \href {https://doi.org/10.18653/v1/2024.acl-long.747} {Evaluating
  very long-term conversational memory of {LLM} agents}.
\newblock In \emph{Proceedings of the 62nd Annual Meeting of the Association
  for Computational Linguistics (Volume 1: Long Papers)}, pages 13851--13870,
  Bangkok, Thailand. Association for Computational Linguistics.

\bibitem[{Müllner(2011)}]{müllner2011modernhierarchicalagglomerativeclustering}
Daniel Müllner. 2011.
\newblock \href {https://arxiv.org/abs/1109.2378} {Modern hierarchical,
  agglomerative clustering algorithms}.
\newblock \emph{Preprint}, arXiv:1109.2378.

\bibitem[{Northcutt et~al.(2022)Northcutt, Jiang, and
  Chuang}]{northcutt2022confidentlearningestimatinguncertainty}
Curtis~G. Northcutt, Lu~Jiang, and Isaac~L. Chuang. 2022.
\newblock \href {https://arxiv.org/abs/1911.00068} {Confident learning:
  Estimating uncertainty in dataset labels}.
\newblock \emph{Preprint}, arXiv:1911.00068.

\bibitem[{Novikov et~al.(2025)Novikov, Vũ, Eisenberger, Dupont, Huang, Wagner,
  Shirobokov, Kozlovskii, Ruiz, Mehrabian, Kumar, See, Chaudhuri, Holland,
  Davies, Nowozin, Kohli, and
  Balog}]{novikov2025alphaevolvecodingagentscientific}
Alexander Novikov, Ngân Vũ, Marvin Eisenberger, Emilien Dupont, Po-Sen Huang,
  Adam~Zsolt Wagner, Sergey Shirobokov, Borislav Kozlovskii, Francisco J.~R.
  Ruiz, Abbas Mehrabian, M.~Pawan Kumar, Abigail See, Swarat Chaudhuri, George
  Holland, Alex Davies, Sebastian Nowozin, Pushmeet Kohli, and Matej Balog.
  2025.
\newblock \href {https://arxiv.org/abs/2506.13131} {Alphaevolve: A coding agent
  for scientific and algorithmic discovery}.
\newblock \emph{Preprint}, arXiv:2506.13131.

\bibitem[{Papineni et~al.(2002)Papineni, Roukos, Ward, and
  Zhu}]{papineni-etal-2002-bleu}
Kishore Papineni, Salim Roukos, Todd Ward, and Wei-Jing Zhu. 2002.
\newblock \href {https://doi.org/10.3115/1073083.1073135} {{B}leu: a method for
  automatic evaluation of machine translation}.
\newblock In \emph{Proceedings of the 40th Annual Meeting of the Association
  for Computational Linguistics}, pages 311--318, Philadelphia, Pennsylvania,
  USA. Association for Computational Linguistics.

\bibitem[{Pedregosa et~al.(2011)Pedregosa, Varoquaux, Gramfort, Michel,
  Thirion, Grisel, Blondel, Prettenhofer, Weiss, Dubourg, Vanderplas, Passos,
  Cournapeau, Brucher, Perrot, and Duchesnay}]{scikit-learn}
F.~Pedregosa, G.~Varoquaux, A.~Gramfort, V.~Michel, B.~Thirion, O.~Grisel,
  M.~Blondel, P.~Prettenhofer, R.~Weiss, V.~Dubourg, J.~Vanderplas, A.~Passos,
  D.~Cournapeau, M.~Brucher, M.~Perrot, and E.~Duchesnay. 2011.
\newblock Scikit-learn: Machine learning in {P}ython.
\newblock \emph{Journal of Machine Learning Research}, 12:2825--2830.

\bibitem[{Peng et~al.(2026)Peng, Qi, Wang, Yao, Hou, and
  Li}]{peng2026wildrewardlearningrewardmodels}
Hao Peng, Yunjia Qi, Xiaozhi Wang, Zijun Yao, Lei Hou, and Juanzi Li. 2026.
\newblock \href {https://arxiv.org/abs/2602.08829} {Wildreward: Learning reward
  models from in-the-wild human interactions}.
\newblock \emph{Preprint}, arXiv:2602.08829.

\bibitem[{Rafailov et~al.(2023)Rafailov, Sharma, Mitchell, Manning, Ermon, and
  Finn}]{dpo}
Rafael Rafailov, Archit Sharma, Eric Mitchell, Christopher~D. Manning, Stefano
  Ermon, and Chelsea Finn. 2023.
\newblock Direct preference optimization: Your language model is secretly a
  reward model.
\newblock In \emph{Advances in Neural Information Processing Systems 36: Annual
  Conference on Neural Information Processing Systems 2023, NeurIPS 2023, New
  Orleans, LA, USA, December 10 - 16, 2023}.

\bibitem[{Robertson and Zaragoza(2009)}]{robertson2009probabilistic}
Stephen Robertson and Hugo Zaragoza. 2009.
\newblock \href {https://doi.org/10.1561/1500000019} {The probabilistic
  relevance framework: Bm25 and beyond}.
\newblock \emph{Found. Trends Inf. Retr.}, 3(4):333–389.

\bibitem[{Shi et~al.(2025{\natexlab{a}})Shi, Xu, Wang, Qin, Wang, Wang, Wang,
  Ebrahimi, and Wang}]{10.1145/3735633}
Haizhou Shi, Zihao Xu, Hengyi Wang, Weiyi Qin, Wenyuan Wang, Yibin Wang, Zifeng
  Wang, Sayna Ebrahimi, and Hao Wang. 2025{\natexlab{a}}.
\newblock \href {https://doi.org/10.1145/3735633} {Continual learning of large
  language models: A comprehensive survey}.
\newblock \emph{ACM Comput. Surv.}, 58(5).

\bibitem[{Shi et~al.(2025{\natexlab{b}})Shi, Xu, Wang, Qin, Wang, Wang, Wang,
  Ebrahimi, and Wang}]{shi2025continuallearning}
Haizhou Shi, Zihao Xu, Hengyi Wang, Weiyi Qin, Wenyuan Wang, Yibin Wang, Zifeng
  Wang, Sayna Ebrahimi, and Hao Wang. 2025{\natexlab{b}}.
\newblock \href {https://doi.org/10.1145/3735633} {Continual learning of large
  language models: A comprehensive survey}.
\newblock \emph{ACM Comput. Surv.}
\newblock Just Accepted.

\bibitem[{Shi et~al.(2026)Shi, Wang, Yang, Lin, He, Wan, Zhou, Jauhar, Chen,
  Xia, Zhang, Zhao, Xu, Song, and Neville}]{wildfeedback}
Taiwei Shi, Zhuoer Wang, Longqi Yang, Ying-Chun Lin, Zexue He, Mengting Wan,
  Pei Zhou, Sujay Jauhar, Sihao Chen, Shan Xia, Hongfei Zhang, Jieyu Zhao,
  Xiaofeng Xu, Xia Song, and Jennifer Neville. 2026.
\newblock \href {https://arxiv.org/abs/2408.15549} {Wildfeedback: Aligning llms
  with in-situ user interactions and feedback}.
\newblock \emph{Preprint}, arXiv:2408.15549.

\bibitem[{Student(1908)}]{student1908probable}
Student. 1908.
\newblock The probable error of a mean.
\newblock \emph{Biometrika}, pages 1--25.

\bibitem[{Swayamdipta et~al.(2020)Swayamdipta, Schwartz, Lourie, Wang,
  Hajishirzi, Smith, and Choi}]{swayamdipta-etal-2020-dataset}
Swabha Swayamdipta, Roy Schwartz, Nicholas Lourie, Yizhong Wang, Hannaneh
  Hajishirzi, Noah~A. Smith, and Yejin Choi. 2020.
\newblock \href {https://doi.org/10.18653/v1/2020.emnlp-main.746} {Dataset
  cartography: Mapping and diagnosing datasets with training dynamics}.
\newblock In \emph{Proceedings of the 2020 Conference on Empirical Methods in
  Natural Language Processing (EMNLP)}, pages 9275--9293, Online. Association
  for Computational Linguistics.

\bibitem[{Villalobos et~al.(2024)Villalobos, Ho, Sevilla, Besiroglu, Heim, and
  Hobbhahn}]{10.5555/3692070.3694094}
Pablo Villalobos, Anson Ho, Jaime Sevilla, Tamay Besiroglu, Lennart Heim, and
  Marius Hobbhahn. 2024.
\newblock Position: will we run out of data? limits of llm scaling based on
  human-generated data.
\newblock In \emph{Proceedings of the 41st International Conference on Machine
  Learning}, ICML'24. JMLR.org.

\bibitem[{von Werra et~al.(2020)von Werra, Belkada, Tunstall, Beeching, Thrush,
  Lambert, Huang, Rasul, and Gallouédec}]{vonwerra2022trl}
Leandro von Werra, Younes Belkada, Lewis Tunstall, Edward Beeching, Tristan
  Thrush, Nathan Lambert, Shengyi Huang, Kashif Rasul, and Quentin Gallouédec.
  2020.
\newblock Trl: Transformer reinforcement learning.
\newblock \url{https://github.com/huggingface/trl}.

\bibitem[{Wei et~al.(2026)Wei, Sachdeva, Coleman, He, Bei, Ning, Ai, Li, He,
  Chi, Wang, Chen, Pereira, Kang, and
  Cheng}]{wei2025evomemorybenchmarkingllmagent}
Tianxin Wei, Noveen Sachdeva, Benjamin Coleman, Zhankui He, Yuanchen Bei,
  Xuying Ning, Mengting Ai, Yunzhe Li, Jingrui He, Ed~H. Chi, Chi Wang, Shuo
  Chen, Fernando Pereira, Wang-Cheng Kang, and Derek~Zhiyuan Cheng. 2026.
\newblock \href {https://arxiv.org/abs/2511.20857} {Evo-memory: Benchmarking
  llm agent test-time learning with self-evolving memory}.
\newblock \emph{Preprint}, arXiv:2511.20857.

\bibitem[{Xu et~al.(2025)Xu, Mei, Gao, Tan, Liang, and Zhang}]{xu2025amem}
Wujiang Xu, Kai Mei, Hang Gao, Juntao Tan, Zujie Liang, and Yongfeng Zhang.
  2025.
\newblock A-mem: Agentic memory for llm agents.
\newblock \emph{arXiv preprint arXiv:2502.12110}.

\bibitem[{Yang et~al.(2025)Yang, Li, Yang, Zhang, Hui, Zheng, Yu, Gao, Huang,
  Lv, Zheng, Liu, Zhou, Huang, Hu, Ge, Wei, Lin, Tang, Yang, Tu, Zhang, Yang,
  Yang, Zhou, Zhou, Lin, Dang, Bao, Yang, Yu, Deng, Li, Xue, Li, Zhang, Wang,
  Zhu, Men, Gao, Liu, Luo, Li, Tang, Yin, Ren, Wang, Zhang, Ren, Fan, Su,
  Zhang, Zhang, Wan, Liu, Wang, Cui, Zhang, Zhou, and
  Qiu}]{yang2025qwen3technicalreport}
An~Yang, Anfeng Li, Baosong Yang, Beichen Zhang, Binyuan Hui, Bo~Zheng, Bowen
  Yu, Chang Gao, Chengen Huang, Chenxu Lv, Chujie Zheng, Dayiheng Liu, Fan
  Zhou, Fei Huang, Feng Hu, Hao Ge, Haoran Wei, Huan Lin, Jialong Tang, and 41
  others. 2025.
\newblock \href {https://arxiv.org/abs/2505.09388} {Qwen3 technical report}.
\newblock \emph{Preprint}, arXiv:2505.09388.

\bibitem[{Yin et~al.(2025)Yin, Wang, Bao, Xu, and
  Wang}]{yin2025clickspreferencemultistagealignment}
Junhao Yin, Haolin Wang, Peng Bao, Ju~Xu, and Yongliang Wang. 2025.
\newblock \href {https://arxiv.org/abs/2508.15811} {From clicks to preference:
  A multi-stage alignment framework for generative query suggestion in
  conversational system}.
\newblock \emph{Preprint}, arXiv:2508.15811.

\bibitem[{Zhai et~al.(2025)Zhai, Tao, Chen, Zou, Chen, Fu, Mai, Yu, Deng, Cao,
  Liu, Ding, and Zhou}]{zhai2025agentevolverefficientselfevolvingagent}
Yunpeng Zhai, Shuchang Tao, Cheng Chen, Anni Zou, Ziqian Chen, Qingxu Fu,
  Shinji Mai, Li~Yu, Jiaji Deng, Zouying Cao, Zhaoyang Liu, Bolin Ding, and
  Jingren Zhou. 2025.
\newblock \href {https://arxiv.org/abs/2511.10395} {Agentevolver: Towards
  efficient self-evolving agent system}.
\newblock \emph{Preprint}, arXiv:2511.10395.

\bibitem[{Zhang et~al.(2026)Zhang, Wang, Zhou, Liao, Feng, Zhang, Wen, Li,
  Xiong, Qi, Tang, and Wen}]{zhang2026memrlselfevolvingagentsruntime}
Shengtao Zhang, Jiaqian Wang, Ruiwen Zhou, Junwei Liao, Yuchen Feng, Weinan
  Zhang, Ying Wen, Zhiyu Li, Feiyu Xiong, Yutao Qi, Bo~Tang, and Muning Wen.
  2026.
\newblock \href {https://arxiv.org/abs/2601.03192} {Memrl: Self-evolving agents
  via runtime reinforcement learning on episodic memory}.
\newblock \emph{Preprint}, arXiv:2601.03192.

\bibitem[{Zhang et~al.(2019)Zhang, Yao, Sun, and Tay}]{recommender}
Shuai Zhang, Lina Yao, Aixin Sun, and Yi~Tay. 2019.
\newblock \href {https://doi.org/10.1145/3285029} {Deep learning based
  recommender system: A survey and new perspectives}.
\newblock \emph{ACM Comput. Surv.}, 52(1).

\bibitem[{Zhang et~al.(2025)Zhang, Li, Long, Zhang, Lin, Yang, Xie, Yang, Liu,
  Lin, Huang, and Zhou}]{zhang2025qwen3embeddingadvancingtext}
Yanzhao Zhang, Mingxin Li, Dingkun Long, Xin Zhang, Huan Lin, Baosong Yang,
  Pengjun Xie, An~Yang, Dayiheng Liu, Junyang Lin, Fei Huang, and Jingren Zhou.
  2025.
\newblock \href {https://arxiv.org/abs/2506.05176} {Qwen3 embedding: Advancing
  text embedding and reranking through foundation models}.
\newblock \emph{Preprint}, arXiv:2506.05176.

\bibitem[{Zhao et~al.(2026)Zhao, Xie, Liu, Huang, Pang, Chen, and
  Grover}]{zhao2026selfdistilledreasoneronpolicyselfdistillation}
Siyan Zhao, Zhihui Xie, Mengchen Liu, Jing Huang, Guan Pang, Feiyu Chen, and
  Aditya Grover. 2026.
\newblock \href {https://arxiv.org/abs/2601.18734} {Self-distilled reasoner:
  On-policy self-distillation for large language models}.
\newblock \emph{Preprint}, arXiv:2601.18734.

\bibitem[{Zhao et~al.(2024)Zhao, Ren, Hessel, Cardie, Choi, and
  Deng}]{wildchat}
Wenting Zhao, Xiang Ren, Jack Hessel, Claire Cardie, Yejin Choi, and Yuntian
  Deng. 2024.
\newblock \href {https://openreview.net/forum?id=Bl8u7ZRlbM} {Wildchat: 1m
  chatgpt interaction logs in the wild}.
\newblock In \emph{The Twelfth International Conference on Learning
  Representations, {ICLR} 2024, Vienna, Austria, May 7-11, 2024}.
  OpenReview.net.

\bibitem[{Zheng et~al.(2025)Zheng, Qiu, Shi, and Ma}]{lifelong}
Junhao Zheng, Shengjie Qiu, Chengming Shi, and Qianli Ma. 2025.
\newblock \href {https://doi.org/10.1145/3716629} {Towards lifelong learning of
  large language models: A survey}.
\newblock \emph{ACM Comput. Surv.}, 57(8).

\end{thebibliography}
